\begin{document}

\title{Self-Comparison for Dataset-Level Membership Inference in Large (Vision-)Language Models}


\author{Jie Ren}
\email{renjie3@msu.edu}
\affiliation{%
  \institution{Michigan State University}
  \country{}
}

\author{Kangrui Chen}
\email{chenkan4@msu.edu}
\affiliation{%
  \institution{Michigan State University}
  \country{}
}

\author{Chen Chen}
\email{ChenA.Chen@sony.com}
\affiliation{%
  \institution{Sony AI}
  \country{}
}

\author{Vikash Sehwag}
\email{vikash.sehwag@sony.com}
\affiliation{%
  \institution{Sony AI}
  \country{}
}

\author{Yue Xing}
\email{xingyue1@msu.edu}
\affiliation{%
  \institution{Michigan State University}
  \country{}
}

\author{Jiliang Tang}
\email{tangjili@msu.edu}
\affiliation{%
  \institution{Michigan State University}
  \country{}
}

\author{Lingjuan Lyu}
\email{lingjuan.lv@sony.com}
\affiliation{%
  \institution{Sony AI}
  \country{}
}

\renewcommand{\shortauthors}{Trovato et al.}

\newcommand{\jt}[1]{\textcolor{red}{JT: #1}}
\newcommand{\xr}[1]{\textcolor{cyan}{Xiaorui: #1}}
\newcommand{\yue}[1]{\textcolor{purple}{Yue: #1}}
\newcommand{\shengl}[1]{\textcolor{teal}{Shenglai: #1}}
\newcommand{\jr}[1]{\textcolor{blue}{Jie: #1}}
\newcommand{\blue}[1]{\textcolor{blue}{#1}}
\newcommand{\pf}[1]{\textcolor{orange}{Pengfei: #1}}
\newcommand{\llv}[1]{{\color{orange}[Lingjuan: #1]}}

\newtheorem{theory}{Theory}

\begin{abstract}
    Large Language Models (LLMs) and Vision-Language Models (VLMs) have made significant advancements in a wide range of natural language processing and vision-language tasks. Access to large web-scale datasets has been a key factor in their success. However, concerns have been raised about the unauthorized use of copyrighted materials and potential copyright infringement. Existing methods, such as sample-level Membership Inference Attacks (MIA) and distribution-based dataset inference, distinguish member data (data used for training) and non-member data by leveraging the common observation that models tend to memorize and show greater confidence in member data. Nevertheless, these methods face challenges when applied to LLMs and VLMs, such as the requirement for ground-truth member data or non-member data that shares the same distribution as the test data. In this paper, we propose a novel dataset-level membership inference method based on Self-Comparison. We find that a member prefix followed by a non-member suffix (paraphrased from a member suffix) can further trigger the model's memorization on training data. Instead of directly comparing member and non-member data, we introduce paraphrasing to the second half of the sequence and evaluate how the likelihood changes before and after paraphrasing. Unlike prior approaches, our method does not require access to ground-truth member data or non-member data in identical distribution, making it more practical. Extensive experiments demonstrate that our proposed method outperforms traditional MIA and dataset inference techniques across various datasets and models, including including public models, fine-tuned models, and API-based commercial models.
\end{abstract}




\received{20 February 2007}
\received[revised]{12 March 2009}
\received[accepted]{5 June 2009}

\maketitle

\section{Introduction}
\label{sec:intro}

Large Language Models (LLMs)~\cite{achiam2023gpt, touvron2023llama, team2024gemma} and Vision-Language Models (VLMs)~\cite{liu2024visual, wang2023cogvlm, bai2023qwenvl} have demonstrated remarkable capabilities in understanding, reasoning, and generating both textual and visual data. These advancements have significantly impacted natural language processing (NLP) tasks such as machine translation~\cite{stahlberg2020neural}, text summarization~\cite{el2021automatic}, and sentiment analysis~\cite{medhat2014sentiment}, as well as vision-language tasks like image captioning~\cite{hossain2019comprehensive}, visual question answering~\cite{lu2023multi}, and cross-modal retrieval~\cite{li2023cross}. 
The rapid development of these models has been fueled by the availability of large web-scale datasets, which have substantially improved their performance. However, concerns are raised about unauthorized data usage. 
Copyrighted materials may be included in training datasets, potentially infringing upon the rights of content creators and causing financial loss to them~\cite{liu2024rethinking, ren2024copyright, wei2024proving}. For example, the New York Times sued OpenAI and Microsoft over the use of copyrighted work for training models\footnote{https://www.nytimes.com/2023/12/27/business/media/new-york-times-open-ai-microsoft-lawsuit.html}. 
{These datasets often require substantial resources to construct by the companies, e.g., the New York Times, and these companies typically do not disclose them for model training purposes. Even when some datasets are open-sourced~\cite{raffel2020exploring, gao2020pile, schuhmann2022laion}, their usage is typically restricted by licenses and is limited to educational and research purposes. Therefore, it is crucial to safeguard these datasets against unauthorized use. }

To protect the datasets, it is crucial to detect their usage in training. Membership Inference Attack (MIA)~\citep{shokri2017membership, shi2023detecting} is a widely used method in evaluating training data leakage. However, most existing research focuses on sample-level inference. When applied to dataset-level inference, MIA faces two significant challenges.
\textit{First}, current large models typically train for only one or a few epochs on extensive web-scale datasets~\cite{brown2020language, touvron2023llama2, bai2023qwen, biderman2023pythia, henighan2020scaling, kaplan2020scaling, komatsuzaki2019one}. As a result, each sample is encountered only a limited number of times, which restricts its contribution to the training process. Consequently, this makes inference on individual samples quite challenging~\cite{duan2024membership, maini2024llm}. Existing methods demonstrate limited effectiveness in sample-level inference.
\textit{Second}, MIA often relies on a strong assumption: prior knowledge of a set of ground-truth member data. It is required either explicitly~\cite{shokri2017membership} or implicitly~\cite{shi2023detecting}. {Specifically, in some methods, e.g., \cite{shi2023detecting, watson2021importance}, the ground-truth data is used to determine the decision threshold for the proposed MIA method, implicitly imposing the prior knowledge assumption.}

{To address aforementioned challenges in sample-level MIA}, \citet{maini2024llm} propose to aggregate the membership information of individuals by using dataset distributions. 
{They determine whether a dataset is member data based on the assumption} that the distribution of MIA features, such as likelihood, for member data should significantly differ from non-member data. 
However, this approach is effective only when a validation set, which is non-member and has the identical distribution to the protected dataset, is available. 
{In our preliminary study of Section~\ref{sec:failed_ddi}, we find that even a small distribution shift between the validation data and the protected data can result in false positive detection.} This highlights the necessity for improvement when there is no access to the non-member data that follows the same distribution as the protected data.

To tackle the above challenges, we propose Self-Comparison Membership Inference (SMI) for dataset-level inference. Instead of directly comparing the distributions of two sets as in~\cite{maini2024llm}, we focus on comparing how these distributions change under paraphrasing. Intuitively, a model should be more confident when generating member data than non-member data~\cite{carlini2021extracting, shi2023detecting, mattern2023membership, watson2021importance}, reflected in higher likelihood scores. For non-member data, since the model has not seen either the original or paraphrased data during training, the likelihood will remain relatively stable before and after paraphrasing. In contrast, for member data, the model has encountered the original data but not its paraphrased version, leading to a more significant likelihood change after paraphrasing. {Our method does not rely on the assumption of access to ground-truth member data or non-member data that follows the same distribution as the protected data}. Instead, it only requires an auxiliary non-member set which is not necessary to be in the same distribution. It can be easily obtained by synthesizing data or using data published after the release of the suspect model. In addition, our approach does not require white-box access to the model's internal parameters or architecture. It only requires access to model outputs such as logits or log probabilities, making it widely applicable even when minimal information about the model is available.

We conduct a comprehensive evaluation across various LLMs and LVMs, including publicly available model checkpoints with well-documented training data, such as Pythia~\cite{biderman2023pythia}, GPT-Neo~\cite{gpt-neo}, LLaVA~\cite{liu2024visual}, and CogVLM~\cite{wang2023cogvlm}. We also validate our approach on models that we fine-tuned. We further apply our method to verify the membership of well-known books on API-based models like GPT-4o~\cite{a2024_hello}. Our extensive results demonstrate that our method outperforms existing techniques when no prior knowledge of the ground-truth member data is available.






\section{Related works}
\label{sec:related}

\textbf{Sample-level MIA.} MIA is first proposed to evaluate the data leakage of individual samples by \citet{shokri2017membership}, and is extensively studied in classification models~\cite{yeom2018privacy, jayaraman2019evaluating, nasr2019comprehensive, truex2019demystifying, salem2018ml}. 
For LLMs, recent works propose metrics based on a core assumption that the model would assign higher prediction confidence to the training data~\cite{carlini2021extracting, shi2023detecting, mattern2023membership, watson2021importance, xie2024recall}. For example, \citet{carlini2021extracting} demonstrate that member data usually has lower perplexity (greater confidence) than non-member data, and improve the detection using zlib ratio. \citet{shi2023detecting} claim that non-member data would contain some outlier tokens with extremely low probability, and use the tokens of $k$\% smallest likelihood to infer the membership. \citet{mattern2023membership} substitute synonym to evaluate the confidence in the replaced tokens. Reference-based methods~\cite{watson2021importance} compare the likelihood with a reference model that is not trained on the target data. However, sample-level MIA requires ground-truth member data to assist the detection, such as training a shadow model~\cite{shokri2017membership} or determining a threshold \cite{carlini2021extracting, shi2023detecting, mattern2023membership, watson2021importance}, which limits the practicality.

\noindent\textbf{Dataset inference.} 
Dataset inference is designed for a different level of membership inference. It considers from the perspective of distribution. \citet{maini2021dataset} propose that training data is more distant from the decision boundaries in classification models. \citet{maini2024llm} extend the dataset inference to LLMs. They assume that the training should bring a distribution shift between member data and non-member data, and propose to use hypothesis testing for membership inference. However, their method requires ground-truth member and non-member data to train a regressor and a validation set to infer membership.

\section{Preliminary Studies}
\label{sec:pre}

As mentioned above, models tend to produce more confident predictions on member data~\cite{shokri2017membership, wei2024proving}. 
The more frequently a model encounters a sample during training, the more confident the prediction is. In extreme cases, this results in the memorization effect in generative models such as language models and diffusion models~\cite{carlini2021extracting, zeng2023exploring}. Particularly, if a data sample is duplicated many times, the model can memorize and re-generate it~\cite{kandpal2022deduplicating}. Even without full memorization, training on member samples can increase a model's confidence in those samples, which can be considered a form of weak memorization~\cite{maini2024llm, xie2024recall, carlini2021extracting}. 

Based on this memorization, sample-level MIA and dataset inference are proposed to detect member data.
In Section~\ref{sec:failed_ddi}, we first present the gap in the existing methods, such as distribution-based dataset inference (DDI)~\cite{maini2024llm}, by comparing two datasets. We show that a non-member dataset is also possible to be different from the validation set, leading to false positive detection in DDI. To further improve dataset-level membership inference, we demonstrate the possibility to fill in the gap by comparing the change of distribution after applying a paraphrase. Then in Section~\ref{sec:prefix_trigger}, we conduct an empirical study showing that a prefix sequence can further trigger the weak memorization and amplify the change of confidence.


\subsection{Definitions}

In this subsection, we define the problem and some key concepts. 

\textbf{Problem definition.} Model $f$ is an auto-regressive generation model that takes a sequence of text and vision tokens as input and outputs a probability distribution predicting the next token, which is a text token. For a given dataset $Q$, the goal of dataset-level membership inference is to determine whether 
$Q$ was included in the training of $f$. We name $Q$ as candidate set, and $f$ as suspect model. $Q$ is composed of token sequences in varying lengths.


\textbf{Prediction confidence.} The likelihood of the predicted token can reflect the confidence of the model~\cite{carlini2021extracting, shi2023detecting, mattern2023membership, watson2021importance}. For a token sequence $X=\left(x_1, x_2, \ldots, x_T\right)$, 
we can use average negative log-likelihood (A-NLL) to represent the confidence to generate it, which is given by
\begin{align*}
    \text{A-NLL} (X) = -\frac{1}{T} \sum_{t=1}^T \log P\left(x_t \mid x_1, x_2, \ldots, x_{t-1}\right).
\end{align*}
It measures how confident a model predicts a sample of text by evaluating the probability it assigns to a sequence of tokens. Lower A-NLL indicates greater confidence.
In our paper, we also use a prefix sequence as input and calculate the A-NLL on the suffix, $X_{i:T}=\left(x_i, x_{i+1}, \ldots, x_T\right)$. In this case, A-NLL is only averaged on the suffix, which is
\begin{align*}
    \text{A-NLL} (X_{i:T}|X_{:i}) = -\frac{1}{T-i+1} \sum_{t=i}^T \log P\left(x_t \mid x_1, x_2, \ldots, x_{t-1}\right).
\end{align*}

A-NLL is also the major term of the loss of the training of LLMs and VLMs. It can be seen as a straightforward metric to reflect the confidence and test membership. Besides A-NLL, metrics based on the intuition of confidence, such as Min-$k\%$ Prob~\cite{shi2023detecting}, Max-$k\%$ Prob~\cite{maini2024llm}, perplexity~\cite{carlini2021extracting}, perturbation-based features~\cite{mattern2023membership}, and zlib Ratio~\cite{carlini2021extracting}, are also proposed to measure membership.

\subsection{Limitation of DDI when the validation set is not available}
\label{sec:failed_ddi}

In this subsection, we use the DDI method proposed by~\citep{maini2024llm} 
as an example to demonstrate the limitation when the non-member data which follows the same distribution as the protected set is not available. We begin by outlining the details of DDI~\cite{maini2024llm} and then discuss the gap and the potential improvement. 



The method of DDI~\citep{maini2024llm} first employs a linear regressor to aggregate various sample-wise MIA metrics derived from prediction confidence, including Min-$k\%$ Prob~\cite{shi2023detecting}, Max-$k\%$ Prob~\cite{maini2024llm}, perplexity~\cite{carlini2021extracting}, perturbation-based features~\cite{mattern2023membership}, and zlib Ratio~\cite{carlini2021extracting}. Trained on ground-truth member and non-member data, the linear regressor produces a membership score for each sample. {Even though the member and non-member data are in the same distribution, the distributions of their membership scores are different.} Given a candidate set $Q$,  $Q_{\text{val}}$ is a validation set that is in the same distribution as $Q$ and is known to be non-member. DDI employs a $z$-test for hypothesis testing, with the null hypothesis ($H_0$) assuming $Q$ was not used in training. This $z$-test determines whether the distributions of membership scores of $Q$ and $Q_{\text{val}}$ are different. If $Q$ was not used for training, their distributions of membership scores should be similar; otherwise, they should differ. By performing a $z$-test between the membership scores of $Q$ and $Q_{\text{val}}$, if the $p$-value falls below a significance threshold (such as 0.05 or 0.01), $Q$ is classified as member data.

\textbf{Limitation of DDI.} DDI relies on a strict assumption: access to a non-member set in the same distribution as $Q$. To test whether $Q$ is a member, we must always prepare a separate set as the validation set. In practice, this is often impractical. If $Q$ is not a training member, even a small difference between $Q_{\text{val}}$ and $Q$ can result in a low $p$-value, leading to a false positive prediction. {Specifically, we find that the $p$-value decreases dramatically as the sizes of the two sample sets in $z$-test (such as $Q$ and $Q_{\text{val}}$ in DDI) increase, which is proved in the following Theory~\ref{theory:linear}. 

\begin{theory}
\label{theory:linear}
    {Given two sample sets $A$ and $B$, $\mu_A$ and $\sigma_A$ are the mean and standard variance of $A$, and $\mu_B$ and $\sigma_B$ are $B$'s.} Without loss of generality, we assume that the sample sizes of $A$ and $B$ are both $n$. If $\mu_A \neq \mu_B$, the $p$-value of $z$-test satisfies:
    \begin{align*}
        p \text{-value} \propto e^{-cn},
    \end{align*}
    where $c$ is a constant coefficient and $c > 0$.
\end{theory}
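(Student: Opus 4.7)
The plan is to write down the two-sample $z$-statistic explicitly, expose its $\sqrt{n}$ scaling in the sample size, and then apply the standard Gaussian tail bound to extract the claimed exponential decay of the $p$-value.

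First, I would write the $z$-statistic for testing $H_0\colon \mu_A = \mu_B$ against a two-sided alternative as
\begin{align*}
    Z \;=\; \frac{\mu_A - \mu_B}{\sqrt{\sigma_A^2/n + \sigma_B^2/n}} \;=\; \sqrt{n}\,\frac{\mu_A - \mu_B}{\sqrt{\sigma_A^2 + \sigma_B^2}},
\end{align*}
so that $|Z| = \sqrt{n}\,|\Delta|/\sigma$ with $\Delta := \mu_A - \mu_B \neq 0$ and $\sigma := \sqrt{\sigma_A^2 + \sigma_B^2} > 0$. This makes clear that $|Z|$ grows like $\sqrt{n}$, which is the engine of the exponential decay in the $p$-value.

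Second, I would use the two-sided $p$-value $p = 2\bigl(1 - \Phi(|Z|)\bigr)$ together with the Mills-ratio asymptotic $1 - \Phi(z) = \frac{\phi(z)}{z}\bigl(1 + o(1)\bigr)$ as $z \to \infty$, where $\phi$ is the standard normal density. Substituting $Z^2 = n\Delta^2/\sigma^2$ gives
\begin{align*}
    p \;=\; \frac{\sqrt{2/\pi}}{|Z|}\,\exp\!\Bigl(-\frac{n\Delta^2}{2\sigma^2}\Bigr)\bigl(1 + o(1)\bigr).
\end{align*}
Setting $c := \Delta^2/(2\sigma^2)$, the hypothesis $\mu_A \neq \mu_B$ forces $c > 0$, and the claimed relation $p\text{-value} \propto e^{-cn}$ follows.

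The main obstacle is interpreting the symbol $\propto$ precisely, since the derivation also carries a polynomial prefactor $1/|Z| \sim n^{-1/2}$ and a constant $\sqrt{2/\pi}$. I would make explicit that the exponential in $n$ is the dominant factor, so the conclusion should be read as $p = \Theta\!\bigl(n^{-1/2} e^{-cn}\bigr)$, which is consistent with the stated proportionality once sub-exponential terms are absorbed into the implicit constant. A secondary subtlety, relevant for applying the theorem to DDI, is that $\mu_A, \sigma_A, \mu_B, \sigma_B$ are themselves sample statistics and thus random; however the theorem treats them as fixed inputs, so it suffices to note that under mild regularity assumptions they converge to their population counterparts, with $c$ converging to the corresponding strictly positive population constant.
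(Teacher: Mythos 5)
Your proposal is correct and follows essentially the same route as the paper: write the two-sample $z$-statistic to expose its $\sqrt{n}$ scaling, apply the Gaussian tail (Mills ratio) asymptotic, and identify the $e^{-z^2/2}$ term as dominant. Your version is slightly more careful than the paper's in naming the constant $c = \Delta^2/(2\sigma^2)$ explicitly, in flagging the $n^{-1/2}$ polynomial prefactor that the paper silently absorbs into ``$\propto$,'' and in noting that the plugged-in means and variances are themselves sample statistics, but none of this changes the structure of the argument.
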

\begin{proof}
    In $z$-test, the statistic $z$ is calculated as 
    \begin{align}
        z
        =\frac{\mu_A-\mu_B}{\sqrt{\left(\frac{\sigma_A^2}{n}\right)+\left(\frac{\sigma_B^2}{n}\right)}}
        =\frac{\mu_A-\mu_B}{\sqrt{\left(\sigma_A^2\right)+\left(\sigma_B^2\right)}} \sqrt{n}.
        \label{eq:z_stat}
    \end{align}

    Then the $p$-value for a one-tailed test is 
    \begin{align*}
        p \text {-value }=1-\Phi(|z|),
    \end{align*}
    where $\Phi(|z|)$ is the cumulative distribution function (CDF) of the normal distribution. 


    When $n$ increases, the tail probability $1-\Phi(|z|)$ can be calculated using an asymptotic approximation~\cite{bleistein1986asymptotic, de2014asymptotic, isaacson2012analysis}, which gives
    \begin{align*}
        1-\Phi(|z|) \approx \frac{1}{|z| \sqrt{2 \pi}} e^{-\frac{z^2}{2}}
    \end{align*}

    So the logarithmic of $p$-value is 
    \begin{align*}
        \log(p \text{-value}) \approx \log \left(\frac{1}{|z| \sqrt{2 \pi}} e^{-\frac{z^2}{2}}\right)
    \end{align*}

    For larger $n$ (i.e. larger $|z|$), the dominant term is $-z^2$, which means $\log (p \text {-value }) \approx-\frac{z^2}{2}$. Substituting Eq.~\ref{eq:z_stat} into this, we have 
    \begin{align*}
        \log (p \text {-value}) \propto-n, \text{ i.e., } p \text{-value} \propto e^{-cn}.
    \end{align*}
\end{proof}

\begin{figure}[t]
    \centering
    \begin{subfigure}{0.49\linewidth}
        \centering
        \includegraphics[width=\linewidth]{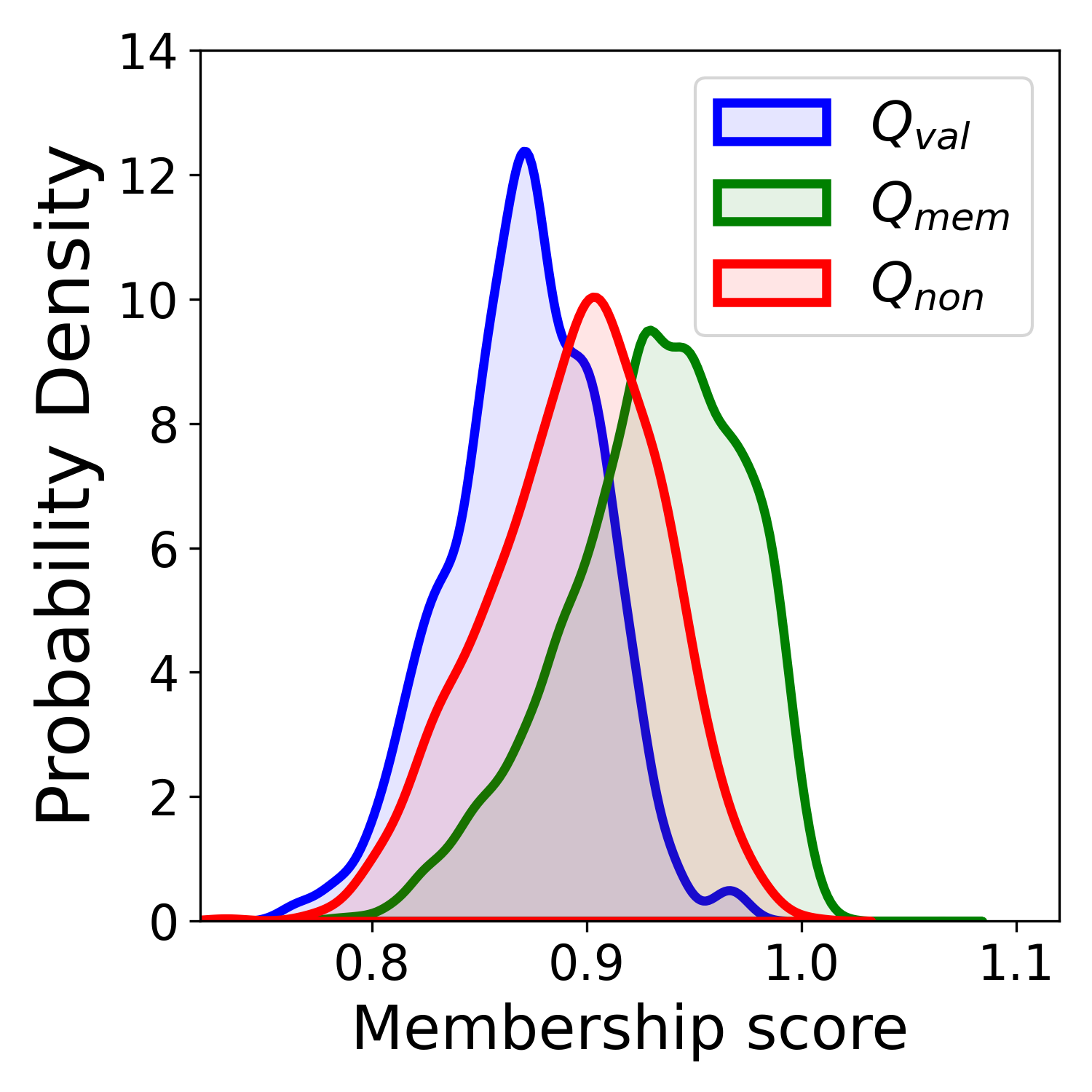} 
        \caption{Distribution}
        \label{fig:failed_example}
    \end{subfigure}\hfill
    \begin{subfigure}{0.49\linewidth}
        \centering
        \includegraphics[width=\linewidth]{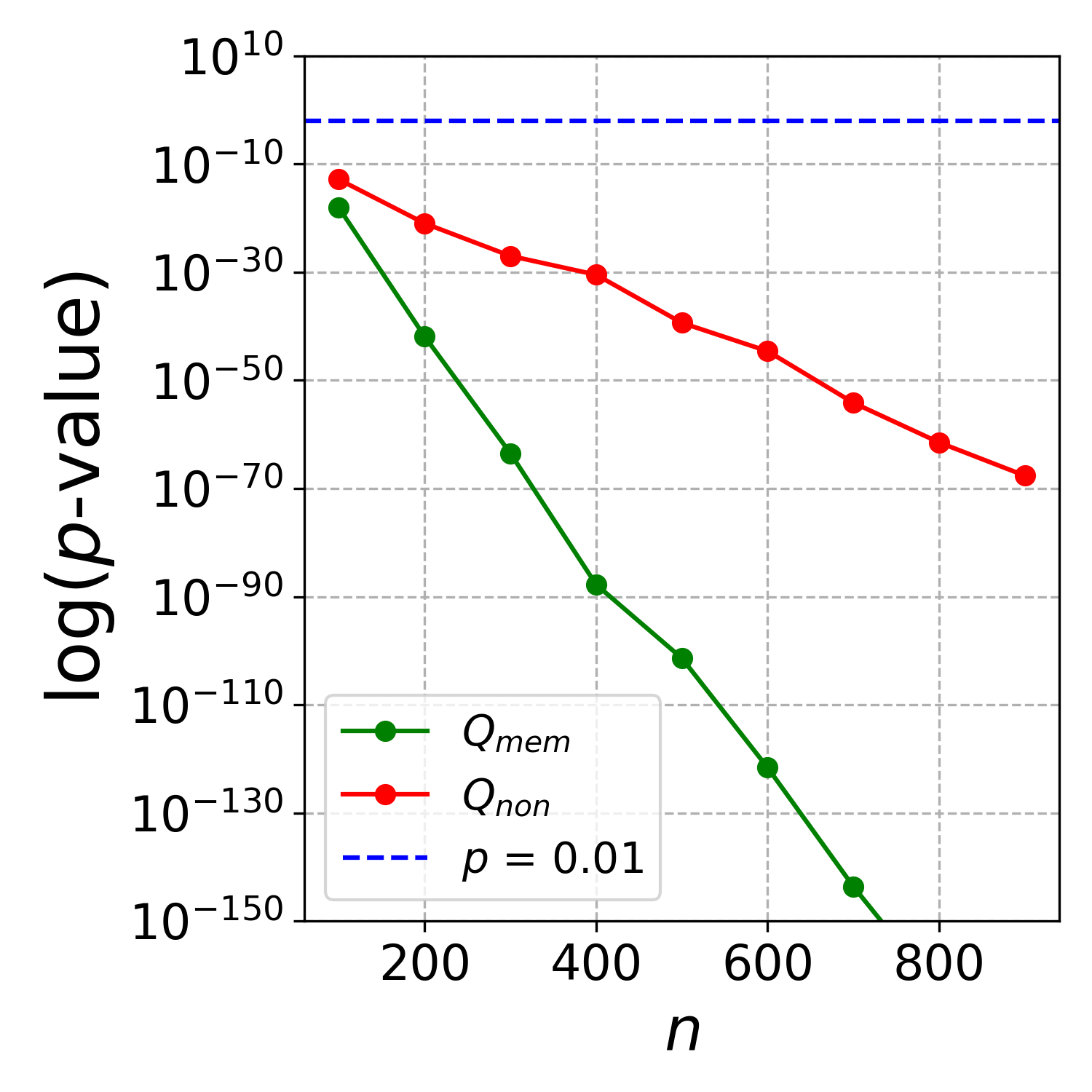} 
        \caption{$p$-value}
        \label{fig:p_value_failed}
    \end{subfigure}
    \caption{An example of using DDI}
\end{figure}

{\textbf{Empirical results.}
We use the example in Figure~\ref{fig:failed_example} to demonstrate a case of false positive detection when the distributions of $Q_{\text{val}}$ and $Q$ are not identical. Specifically, we use $Q_{\text{val}}$ to test the membership of two sets, $Q_{\text{mem}}$ and $Q_{\text{non}}$. In this scenario, $Q_{\text{mem}}$ is member data, while $Q_{\text{non}}$ is non-member data. None of them has an identical distribution to $Q_{\text{val}}$. We use Pythia-12B~\cite{biderman2023pythia} as the suspect model, PILE~\cite{gao2020pile} as $Q_{\text{mem}}$, FineWeb~\cite{penedo2024fineweb} in 2024 as $Q_{\text{non}}$ and BBC news in 2024 as $Q_{\text{val}}$. In Figure~\ref{fig:failed_example}, while $Q_{\text{mem}}$ differs more significantly from $Q_{\text{val}}$, there is also a small difference between $Q_{\text{non}}$ and $Q_{\text{val}}$. By calculating the $p$-value in Figure~\ref{fig:p_value_failed}, we make two key observations. \textit{First}, the $p$-value for $Q_{\text{non}}$ is also very low, indicating a false positive detection. \textit{Second}, the logarithm of the $p$-value decreases linearly as the sample size $n$ increases, which aligns with Theory~\ref{theory:linear}. The difference between $\overline{Q}_{\text{val}}$ and $\overline{Q}_{\text{non}}$ may arise from various factors, such as the lack of non-member data that follows the same distribution and the randomness of the sampling process.}


\begin{figure}[t]
    \centering
    \begin{subfigure}{0.49\linewidth}
        \centering
        \includegraphics[width=\linewidth]{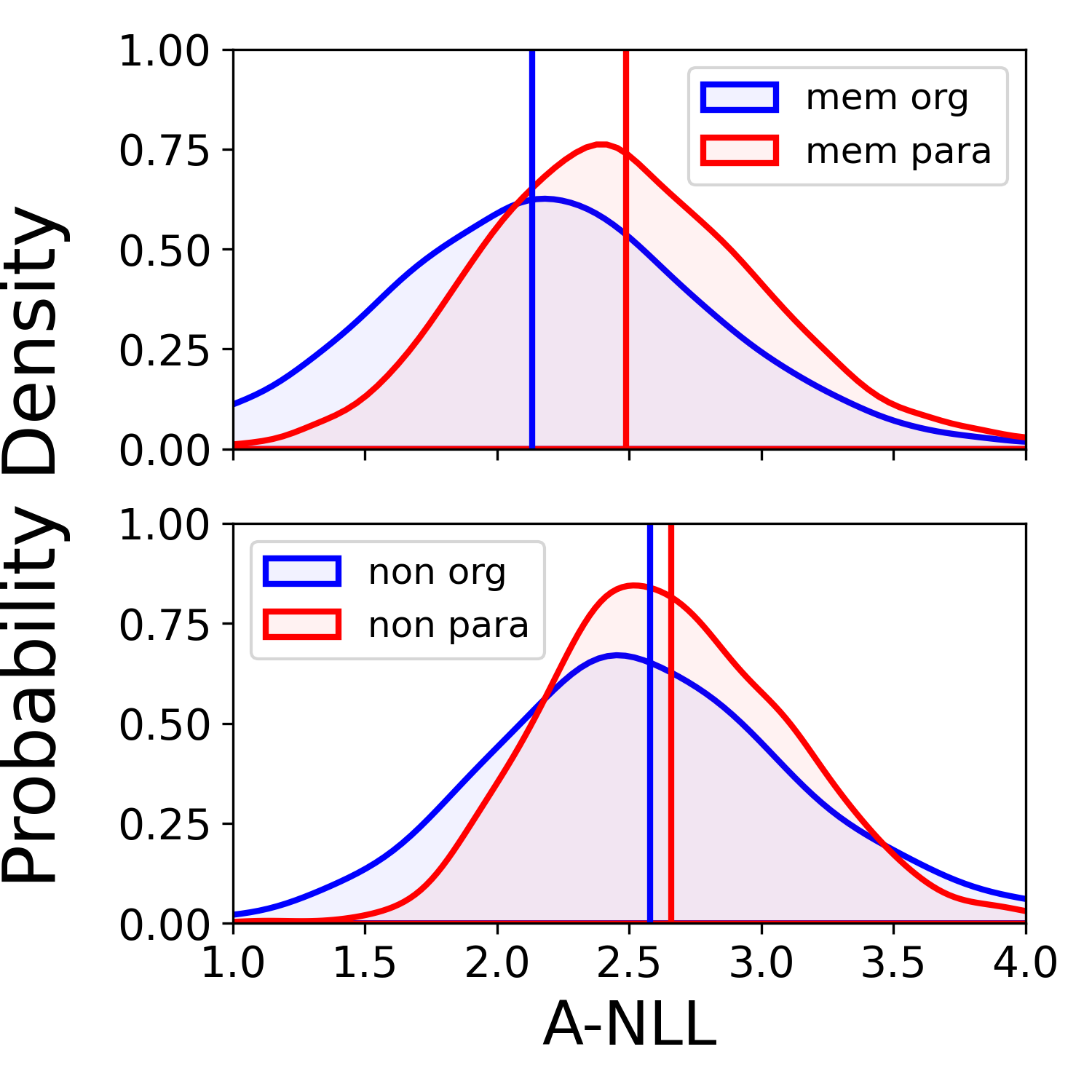} 
        \caption{The change of distribution}
        \label{fig:nll_change}
    \end{subfigure}\hfill
    \begin{subfigure}{0.49\linewidth}
        \centering
        \includegraphics[width=\linewidth]{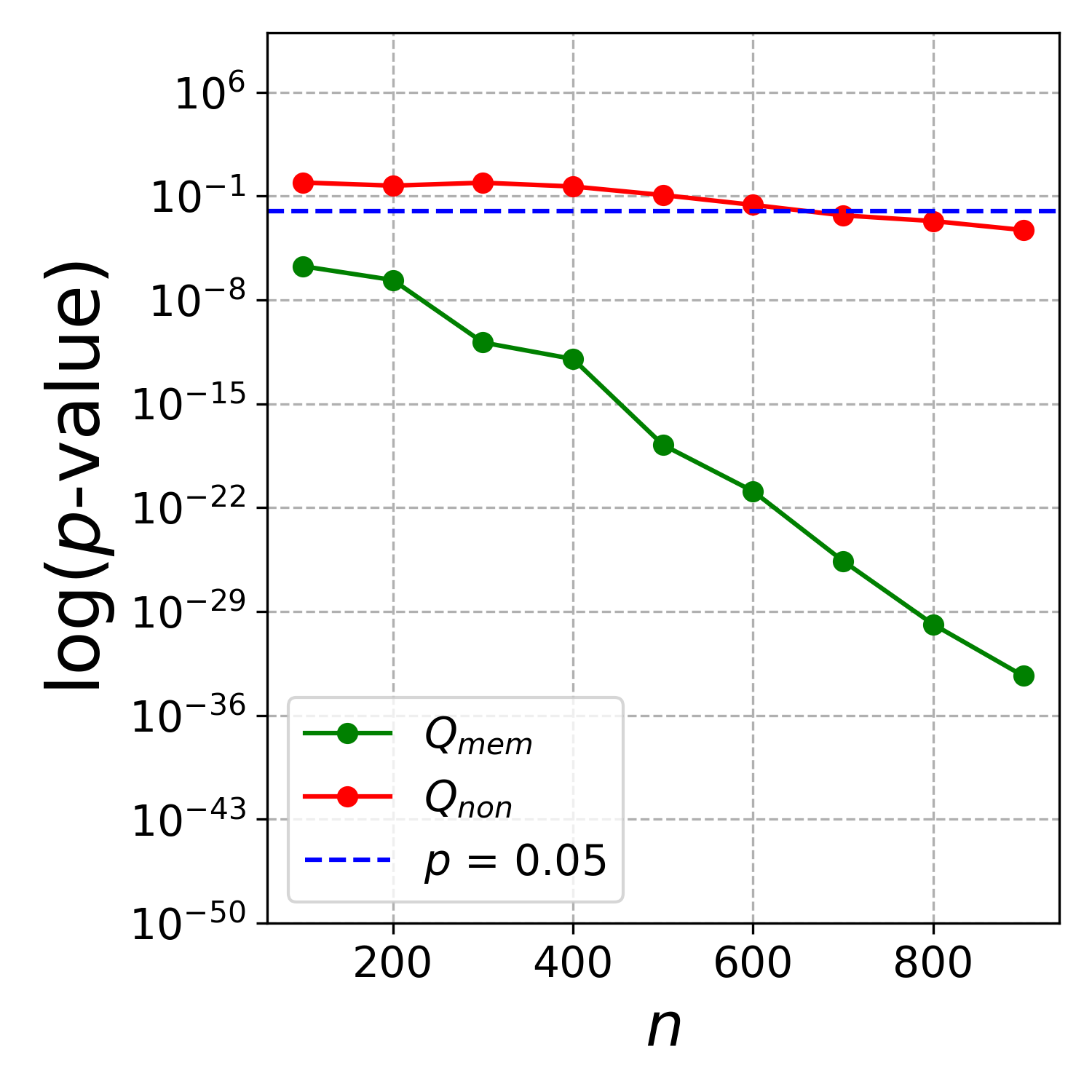} 
        \caption{$p$-value}
        \label{fig:p_value_ours_example}
    \end{subfigure}
    \caption{The change of distribution and $p$-values before and after $Q$ is paraphrased.}
\end{figure}

Therefore, to avoid the false positive detection, we do not compare $Q$ directly with $Q_{\text{val}}$. Instead, we introduce changes through paraphrasing the samples in $Q$ and compare the differences before and after paraphrasing. Figure~\ref{fig:nll_change} illustrates the distribution of A-NLL on the suspect model. After paraphrasing, the mean A-NLL of $Q_{\text{mem}}$ increases more significantly than that of $Q_{\text{non}}$. This is because the model has encountered the member data during training, resulting in (weak) memorization. After paraphrasing, the verbatim member data becomes non-member data, leading to an increase in A-NLL. In contrast, non-member data is less affected by paraphrasing, as the model has never been trained on it. 
In Figure~\ref{fig:p_value_ours_example}, we also conduct a $z$-test to compare the distributions before and after paraphrasing. 
Denote $\mu_{\text{org}}$ as the mean of A-NLL of the sample set \textit{before} paraphrase and $\mu_{\text{para}}$ as the mean of A-NLL of the sample set \textit{after} paraphrase. Then, the null hypothesis and alternative hypothesis ($H_1$) can be formulated as
\begin{align*}
    H_0: \mu_{\text{org}} \geq \mu_{\text{para}} ; \quad H_1: \mu_{\text{org}} < \mu_{\text{para}}
\end{align*}
Lower $p$-value indicates a more significant change after paraphrasing.
In Figure~\ref{fig:p_value_ours_example}, the $p$-value for $Q_{\text{mem}}$ decreases much more rapidly, demonstrating that $Q_{\text{mem}}$ undergoes greater change after paraphrasing compared to $Q_{\text{non}}$. In the following subsection, we further provide a study on when will the change become more obvious.



\subsection{Prefix Sequences can trigger memorization on training Data}
\label{sec:prefix_trigger}

\begin{figure}[t]
    \centering
    \begin{subfigure}{0.49\linewidth}
        \centering
        \includegraphics[width=\linewidth]{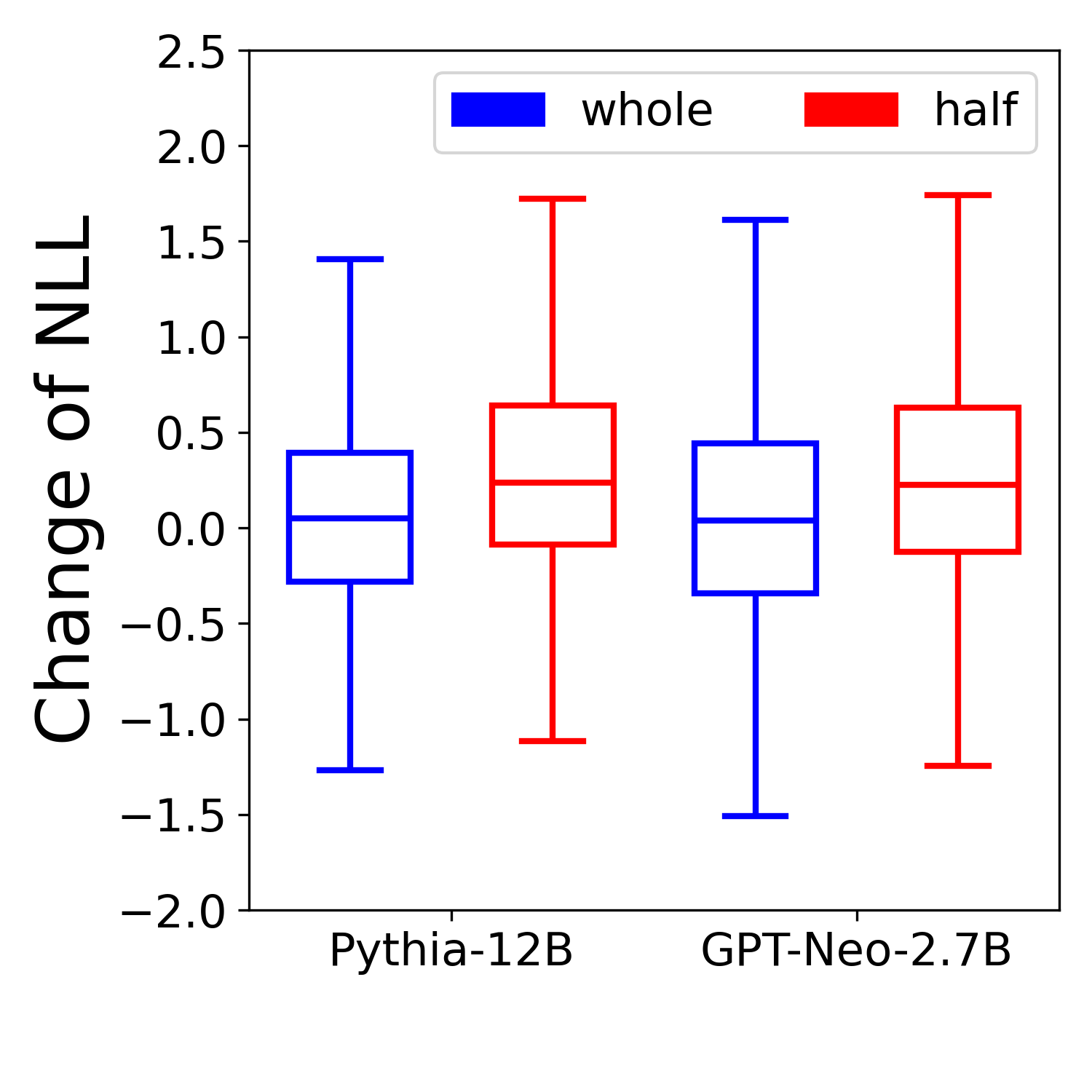} 
        \caption{The change of distribution}
        \label{fig:change_whole_half}
    \end{subfigure}\hfill
    \begin{subfigure}{0.49\linewidth}
        \centering
        \includegraphics[width=\linewidth]{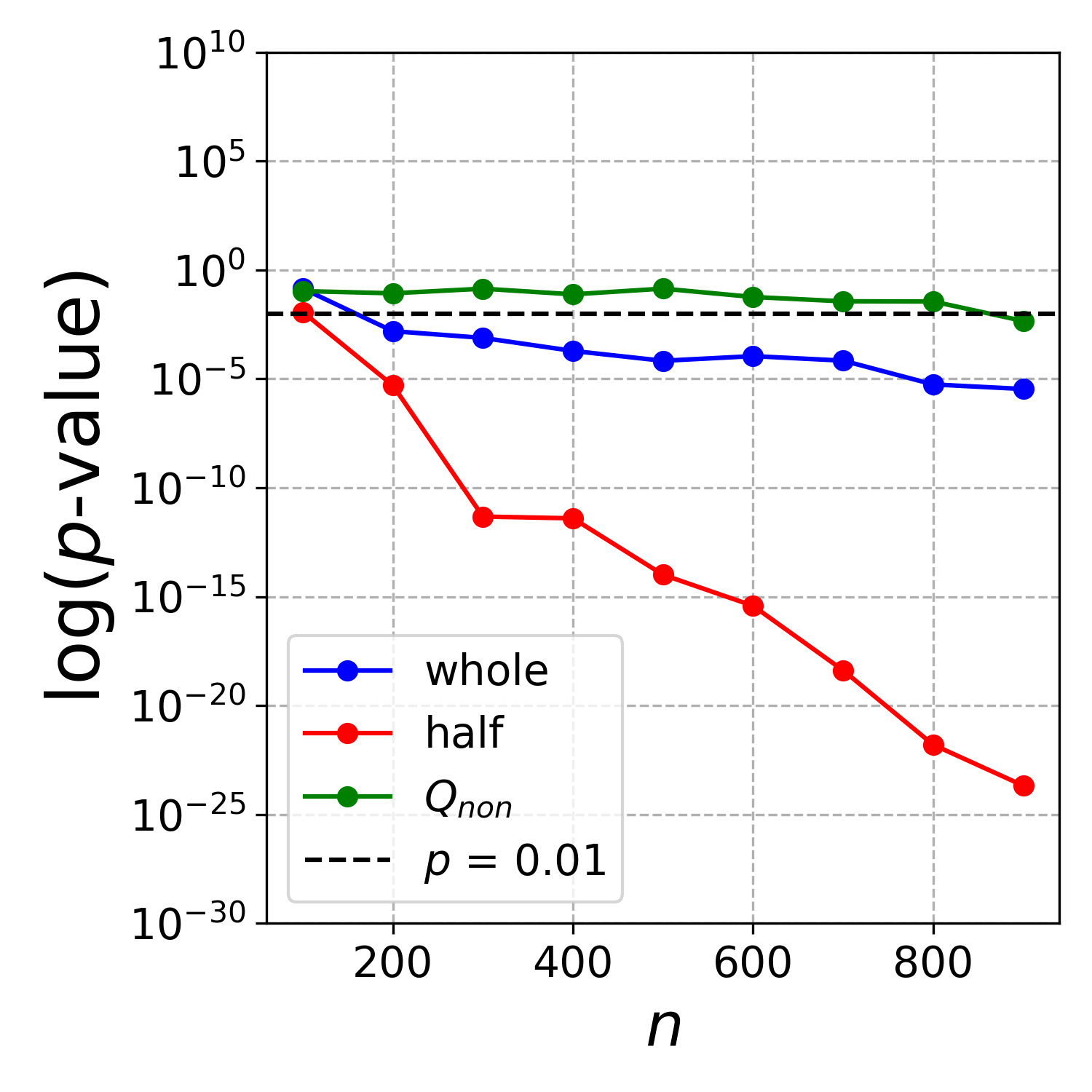} 
        \caption{$p$-value}
        \label{fig:p_value_whole_half}
    \end{subfigure}
    \caption{Different memorization results between paraphrasing the whole sequence and half of the sequence.}
\end{figure}

\begin{figure*}[t]
    \centering
    \includegraphics[width=0.8\textwidth]{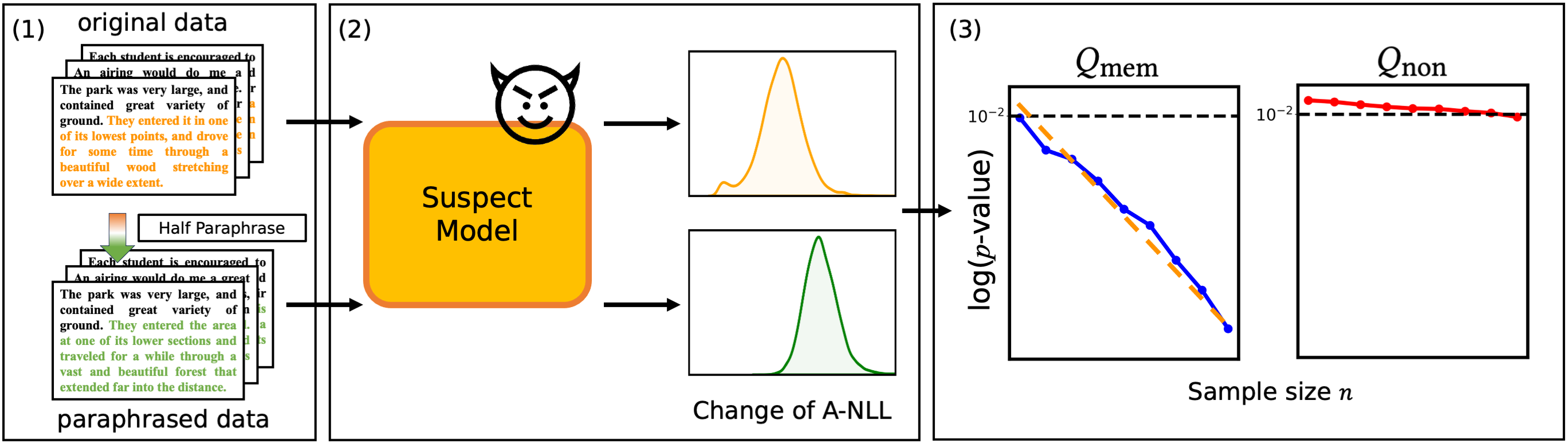}  
    \caption{The framework of Self-Comparison}
    \label{fig:framework}
\end{figure*}

As we analyzed, the model exhibits greater confidence (i.e., lower A-NLL) on data it has encountered during training. However, {in this subsection, }we observe that paraphrasing the entire sequence of member data does not always result in a high A-NLL, making it difficult to distinguish from non-member data.
In this subsection, we provide a detailed analysis of this phenomenon and point out how to exemplify the change of A-NLL caused by the paraphrase introduced in the previous subsection.

In Figure~\ref{fig:change_whole_half}, we present the change in A-NLL after applying two different paraphrasing methods: whole paraphrase and half paraphrase. For the whole paraphrase (blue in Figure~\ref{fig:change_whole_half}), we paraphrase the whole text sequences. For each sample, we calculate the NLLs of both the original and paraphrased text, then subtract the original NLL from the paraphrased NLL and plot the results. For the half paraphrase (red in Figure~\ref{fig:change_whole_half}), we leave the first half of the text unchanged, paraphrasing only the second half, and report the change in NLL for the paraphrased portion. As shown, for the whole paraphrase, the average change in NLL is slightly above 0, indicating that paraphrasing causes only a minor increase in NLL. In contrast, the half paraphrase leads to a much more significant increase in NLL. Consequently, in Figure~\ref{fig:p_value_whole_half}, the $p$-value for the half paraphrase decreases more rapidly, making it easier to distinguish from non-member data.

We conjecture that half paraphrasing can introduce an abrupt and unexpected change in the prediction sequence. When given a prefix, the causal model relies on it to predict the likelihood distribution of the next token. If the prefix comes from member data, the model is likely to follow its memorization verbatim, even if that memorization is weak. Paraphrasing the second half can disrupt this expectation, leading to a higher NLL. Conversely, if the prefix is from non-member data, the paraphrased text still appears reasonable, and the model does not have a strong tendency to predict the original text verbatim. In contrast, the whole paraphrasing does not induce such unexpected changes, as no member prefix exists to trigger a verbatim prediction based on memorization.

For VLMs, the image tokens in the input can be viewed as the prefix. We will present the difference of the processing between LLMs and VLMs in the following section.

\section{Method}

Based on the observations from the preliminary studies, in this section, we propose our method, Self-Comparison Membership Inference (SMI). We first introduce the framework of Self-Comparison to get the $p$-values in Section~\ref{sec:self_com}. Then we explain how to use the $p$-values and its trend as $n$ changes to determine membership in Section~\ref{sec:creteria}. Lastly, we introduce a variant for a challenging case where only the probability of the predicted token is available in Section~\ref{sec:gpt_logits}.

\subsection{Self-Comparison}
\label{sec:self_com}

Building on preliminary studies, we propose SMI to infer the membership by analyzing the change of A-NLL distribution. To caption this distribution shift, SMI uses the $p$-value derived from the hypothesis testing between the datasets before and after paraphrasing. In this subsection, we provide a detailed explanation of the hypothesis testing process and how the $p$-values are calculated. We name the process to get the $p$-values as Self-Comparison. The framework of Self-Comparison can be found in Figure~\ref{fig:framework}, which is performed in three key steps. 
We begin by detailing these steps using textual datasets, specifically focusing on membership inference in LLMs. Following this, we discuss how the framework adapts to multi-modal datasets, such as VQA and captioning, to demonstrate the framework for VLMs.

\textbf{LLMs.}
As discussed in Section~\ref{sec:prefix_trigger}, to make the change of member data more obvious and easier to capture, SMI takes advantage of triggered memorization of prefix sequences. 
For a given set $Q$, Self-Comparison processes it using the following steps.
\begin{itemize}
    \item[(1)] \textit{Half paraphrase.} We first truncate all the samples of $Q$ into sequences with a length of 150 tokens, and then paraphrase the second half of the sequence. To ensure the completeness of sentences, we remove the last sentence if it is broken resulting from truncation. To get a complete second half, we segment the sequence by sentences rather than tokens. We use Gemma 2~\cite{team2024gemma2} for paraphrase.
    \item[(2)] \textit{Membership Metric calculation.} In the proposed SMI, we use the most straightforward A-NLL as the metric. We calculate the average NLL on the tokens in the second half of the sequence. By inputting the original data and paraphrased data into the LLMs, we obtain two sets of A-NLL values. (It is worth noting that there is a challenge with certain models, such as ChatGPT-4, which only return the log probability for the predicted token. We provide a comprehensive discussion of this case in Section~\ref{sec:gpt_logits}.)
    \item[(3)] \textit{Comparison.} Based on the preliminary studies, we know that if the model is trained on $Q$, the distribution of A-NLL should have a significant change. We calculate the $p$-values of the hypothesis testing with $H_0$ that $Q$ is not member data. To get the trend of $p$-value as the sample size increases, we calculate a series of $K$ $p$-values, $\left\{p_i \mid 1 \leq i \leq K\right\}$ at $K$ equal intervals, $\left\{n_i \mid 1 \leq i \leq K\right\}$, where $n_i = \frac{i}{K}N$ and $N$ is the total number of samples in $Q$. We use the slope of linear least-squares regression for $p_i$ and $n_i$ (such as the orange dash line in $p$-value of $Q_{\text{mem}}$ in Figure~\ref{fig:framework}) to represent the trend of $p$-values. The slope can be denoted as 
    $
        \beta=\frac{\sum_{i=1}^K\left(n_i-\overline{n}\right)\left(\log p_i-\overline{\log p}\right)}{\sum_{i=1}^K\left(n_i-\overline{n}\right)^2},
    $
    where $\overline{n}$ is the mean of $\left\{n_i \mid 1 \leq i \leq K\right\}$ and $\overline{\log p}$ is the mean of $\left\{\log p_i \mid 1 \leq i \leq K\right\}$.
\end{itemize}

It is worth mentioning that, based on our experiments in Section~\ref{sec:exp}, in SMI, $N = 500$ is large enough to provide a solid inference. This means that for web-scale large datasets, we do not need to test the whole dataset. We can sample a set of size $N$, which is efficient and effective to verify the unauthorized dataset usage.

\textbf{VLMs.} For datasets such as VQA and image captioning, image is also a part of input. This leads to a small difference in the step of paraphrase. For such datasets, SMI keeps the image tokens and questions unchanged and paraphrases the textual response. For the datasets with multi-round chatting, we only use the first round. For step (2) and (3), it is the same as LLMs.

After obtaining the results of $p$-values, in the following subsection, we present the determination conditions for classifying $Q$ as a training member. 

\subsection{Criteria for membership}
\label{sec:creteria}

Although we use hypothesis testing to get $p$-values, the traditional significance level like $p=0.01$ is not applicable to our problem. As we mentioned in Section~\ref{sec:failed_ddi}, a small difference may lead to a small $p$-value of $Q_{\text{non}}$ especially when $n$ is large. The paraphrase may also bring in such small differences. In this subsection, for a more rigorous membership inference, we propose to use an auxiliary dataset $Q_{\text{aux}}$ to eliminate the impact of paraphrase and present the two criteria for membership using $Q_{\text{aux}}$.

SMI uses a non-member set as the auxiliary set, $Q_{\text{aux}}$. This $Q_{\text{aux}}$ is non-member, but not necessary to be the same distribution to $Q$. It is easy to obtain from synthesized data, unpublished data, and the data released after the model training date. We denote the $p$-values from Self-Comparison of $Q_{\text{aux}}$ as $\left\{p_i^{\prime} \mid 1 \leq i \leq K\right\}$, and its slope as $\beta^{\prime}$. In the following experiments of Section~\ref{sec:exp}, we observe that $p_i^{\prime}$ is usually larger than or close to 0.01 and $\beta^{\prime}$ is close to 0. Based on $Q_{\text{aux}}$, we say that $Q$ is the member data in the training set when it satisfies the following two criteria: 
\begin{align}
    \beta < & ~ \beta^{\prime} - \epsilon_1, \label{eq:slope_criteria} \\
    \log p_K < & \log p_K^{\prime} - \epsilon_2. \label{eq:pvalue_criteria}
\end{align}
Both of the criteria indicate the change of A-NLL of $Q$ should be more significant than $Q_{\text{aux}}$. Eq.~\ref{eq:slope_criteria} means the slope of $Q$ should be smaller than $Q_{\text{aux}}$ by $\epsilon_1$, while Eq.~\ref{eq:pvalue_criteria} means the $p$-value of $Q$ should be smaller than $Q_{\text{aux}}$ by $\epsilon_2$. 
The constants, $\epsilon_1$ and $\epsilon_2$, are two margin values to reduce randomness. They are not data-specific since the $z$-test is not data-specific for various datasets.
This is different from the threshold in sample-wise MIA methods. 

\textbf{\textit{Remarks.}} From the above pipeline of Self-Comparison and the criteria, we can find that SMI does not rely on ground-truth member data. This relaxes the strict assumptions and makes the dataset-level inference much more practical.


\subsection{A variant when not the logits/probability of the whole vocabulary are available}
\label{sec:gpt_logits}

For some models such as the API of Together AI platform\footnote{https://www.together.ai/}, the logits for the whole vocabulary at each token position are available. For example, if the previous sequence is ``\textit{Today is a sunny ...}'' and the next token is predicted to be ``\textit{day}'', these models will provide not only the probability of ``\textit{day}'', but also the probability of the whole vocabulary. 
However, for other models like GPT-4o, they only provide the probability of the predicted token. For example, if $Q$ has a sample that is ``\textit{Today is a sunny and warm day}'', we need the probability of ``\textit{and warm day}'' to calculate A-NLL. But the model prediction is ``\textit{Today is a sunny day}''. It only provides the probability of ``\textit{day}'', but we need the probability of ``\textit{and}''. This is a more difficult scenario.

To solve this challenge, we propose to use a constant as the probability of the unavailable tokens. To get the probability of ``\textit{and warm day}'', we first input the prefix ``\textit{Today is a sunny}'', if the model prediction on the next token is ``\textit{and}'', we can have the probability of ``\textit{and}''. However, if the prediction is ``\textit{day}'', it means the probability of ``\textit{and}'' should be low. Thus, we assign a low constant probability value (i.e., large NLL) to it.
Then we add ``\textit{and}'' into the prefix and continue to input ``\textit{Today is a sunny and}'' to predict ``\textit{warm}'' until we iterate through all the following tokens. With this improvement, we can also use SMI when only the probability of the predicted token is available.






\section{Experiments}
\label{sec:exp}

In this section, we present the experiments to show the effectiveness of SMI. We conduct the experiments across different models and datasets in Section~\ref{exp:main}, and ablation studies including model size, sample size, and margin values in Section~\ref{exp:ablation}. {Additionally, in Appendix~\ref{appd:add_exp}, we also conduct the experiment to demonstrate the effectiveness of SMI when only part of $Q$ is used for training.} Next, we first introduce the experimental settings.

\begin{table*}[t]
    \caption{Dataset-level membership inference on public and fine-tuned models. The results are F1 score (recall/precision). We label the best average F1 score by bold fonts.}
    \label{tab:pun_llm}
    \centering
    \resizebox{\textwidth}{!}{
        \begin{tabular}{lccccc|c}
        \toprule
        \textbf{Public} LLM & Pythia-1.4B & Pythia-6.9B & Pythia-12B & GPT-Neo-1.3B & GPT-Neo-2.7B & \multirow{2}{*}{Average} \\
        $Q_{\text{mem}}/Q_{\text{non}}/Q_{\text{aux}}$ & PILE/F-CC/NY & PILE/NY/F-CC & PILE/F-CC/NY & PILE/NY/F-CC &  PILE/F-CC/NY \\

        
        \midrule
         
        %
        %
        A-NLL (Dataset) & 0.667 (1.000/0.500) & 1.000 (1.000/1.000) & 0.667 (1.000/0.500) & 1.000 (1.000/1.000) & 0.667 (1.000/0.500) & 0.800 (1.000/0.700)\\ 
        Min-$k\%$ (Dataset) & 0.667 (1.000/0.500) & 1.000 (1.000/1.000) & 0.667 (1.000/0.500) & 1.000 (1.000/1.000) & 0.667 (1.000/0.500) & 0.800 (1.000/0.700)\\ 
        zlib (Dataset) & 0.667 (1.000/0.500) & 1.000 (1.000/1.000) & 0.667 (1.000/0.500) & 1.000 (1.000/1.000) & 0.667 (1.000/0.500) & 0.800 (1.000/0.700)\\ 
        DDI & 0.667 (1.000/0.500) & 0.667 (1.000/0.500) & 0.667 (1.000/0.500) & 0.667 (1.000/0.500) & 0.667 (1.000/0.500) & 0.667 (1.000/0.500)\\ 
        \midrule
        SMI (ours) & 0.958 (0.920/1.000) & 1.000 (1.000/1.000) & 0.995 (1.000/0.990) & 0.980 (0.970/0.990) & 0.985 (0.970/1.000) & \textbf{0.984} (0.972/0.996)\\ 
        \bottomrule
        \toprule
        \textbf{Public} VLM & \multicolumn{3}{c}{LLaVA-v1.5} & CogVLM-v1 & CogVLM-v1-chat & \multirow{2}{*}{Average} \\
        $Q_{\text{mem}}/Q_{\text{non}}/Q_{\text{aux}}$ & TVQA/Flkr/NC & VG/NC/Flkr & COCO/Flkr/NC & Cog/NC/Flkr &  Cog/Flkr/NC \\
        \midrule
        A-NLL (Dataset) & 0.667 (1.000/0.500) & 1.000 (1.000/1.000) & 0.669 (1.000/0.503) & 0.000 (0.000/0.000) & 0.667 (1.000/0.500) & 0.600 (0.800/0.501)\\ 
        Min-$k\%$ (Dataset) & 1.000 (1.000/1.000) & 1.000 (1.000/1.000) & 0.995 (1.000/0.990) & 0.000 (0.000/0.000) & 0.667 (1.000/0.500) & 0.732 (0.800/0.698)\\ 
        zlib (Dataset) & 0.667 (1.000/0.500) & 1.000 (1.000/1.000) & 0.667 (1.000/0.500) & 1.000 (1.000/1.000) & 0.667 (1.000/0.500) & 0.800 (1.000/0.700)\\ 
        DDI & 0.667 (1.000/0.500) & 0.873 (1.000/0.775) & 1.000 (1.000/1.000) & 0.667 (1.000/0.500) & 0.667 (1.000/0.500) & 0.775 (1.000/0.655)\\ 
        \midrule
        SMI (ours) & 0.980 (0.990/0.971) & 1.000 (1.000/1.000) & 0.980 (1.000/0.962) & 1.000 (1.000/1.000) & 1.000 (1.000/1.000) & \textbf{0.992} (0.998/0.986)\\ 
        \bottomrule
        \toprule
        \textbf{Fine-tuned} & \multicolumn{3}{c}{Pythia-1.4B} & \multicolumn{2}{c|}{LLaVA (initialized by Vicuna)} & \multirow{2}{*}{Average} \\
        $Q_{\text{mem}}/Q_{\text{non}}/Q_{\text{aux}}$ & F-CC/NY/BBC & NY/BBC/F-CC & BBC/F-CC/NY & Flkr/NC/TC &  COCO/Flkr/VG \\
        \midrule
        A-NLL (Dataset) & 0.667 (1.000/0.500) & 1.000 (1.000/1.000) & 1.000 (1.000/1.000) & 0.667 (1.000/0.500) & 1.000 (1.000/1.000) & 0.867 (1.000/0.800)\\ 
        Min-$k\%$ (Dataset) & 0.667 (1.000/0.500) & 1.000 (1.000/1.000) & 1.000 (1.000/1.000) & 0.667 (1.000/0.500) & 1.000 (1.000/1.000) & 0.867 (1.000/0.800)\\ 
        zlib (Dataset) & 0.667 (1.000/0.500) & 1.000 (1.000/1.000) & 1.000 (1.000/1.000) & 0.667 (1.000/0.500) & 0.667 (1.000/0.500) & 0.800 (1.000/0.700)\\ 
        DDI & 0.667 (1.000/0.500) & 0.694 (1.000/0.532) & 0.667 (1.000/0.500) & 0.667 (1.000/0.500) & 0.667 (1.000/0.500) & 0.672 (1.000/0.506)\\
        \midrule
        SMI (ours) & 1.000 (1.000/1.000) & 0.995 (1.000/0.990) & 1.000 (1.000/1.000) & 0.971 (1.000/0.943) & 1.000 (1.000/1.000) & \textbf{0.993} (1.000/0.987)\\         
        \bottomrule
        \end{tabular}
    }
    \resizebox{0.85\textwidth}{!}{
        \begin{tabular}{lcccc|c}
        \toprule
        \textbf{API-based} & \multicolumn{4}{c|}{GPT-4o} & \multirow{2}{*}{Average} \\
        $Q_{\text{mem}}/Q_{\text{non}}/Q_{\text{aux}}$ & Bible/B-Neg/BBC & Pride/B-Neg/BBC & HP/B-Neg/BBC &  B-Pos/B-Neg/BBC \\
        
        \midrule
        A-NLL (Dataset) & 1.000 (1.000/1.000) & 0.000 (0.000/0.000) & 0.000 (0.000/0.000) & 0.000 (0.000/0.000) & 0.250 (0.250/0.250)\\ 
        Min-$k\%$ (Dataset) & 1.000 (1.000/1.000) & 0.000 (0.000/0.000) & 0.000 (0.000/0.000) & 0.000 (0.000/0.000) & 0.250 (0.250/0.250)\\ 
        zlib (Dataset) & 0.000 (0.000/0.000) & 0.667 (1.000/0.500) & 0.667 (1.000/0.500) & 0.667 (1.000/0.500) & 0.500 (0.750/0.375)\\ 
        DDI & 0.667 (1.000/0.500) & 0.667 (1.000/0.500) & 0.667 (1.000/0.500) & 0.667 (1.000/0.500) & 0.667 (1.000/0.500)\\ 
        \midrule
        SMI (ours) & 1.000 (1.000/1.000) & 1.000 (1.000/1.000) & 0.919 (0.850/1.000) & 0.958 (0.920/1.000) & \textbf{0.969} (0.943/1.000)\\ 
        
        \bottomrule
        \end{tabular}
    }
\end{table*}

\subsection{Experimental Settings}

\textbf{Suspect models and datasets.} We conduct experiments on public model checkpoints, models fine-tuned by ourselves, and API-based commercial models. The details are as follows:

\textit{(1)} For public checkpoints, we include \textbf{two LLMs}, Pythia~\cite{biderman2023pythia} and GPT-Neo~\cite{gpt-neo}, and \textbf{two VLMs}, LLaVA~\cite{liu2024visual} and CogVLM~\cite{wang2023cogvlm}. The training dataset of Pythia and GPT-Neo is PILE~\cite{gao2020pile}, and we use New Yorker Caption Contest (NY)~\cite{hessel2023androids} and FineWeb (F-CC)~\cite{penedo2024fineweb} crawled in 2024 as $Q_{\text{non}}$ and $Q_{\text{aux}}$. The training data of LLaVA includes TextVQA (TVQA)~\cite{singh2019towards}, Visual Genome (VG)~\cite{krishna2017visual}, and MS COCO (COCO)~\cite{lin2014microsoft}. The training data of CogVLM includes CogVLM-SFT (Cog)~\cite{wang2023cogvlm}. And we use NoCaps (NC)~\cite{agrawal2019nocaps} and Flickr (Flkr)~\cite{flickr30k} as $Q_{\text{non}}$ and $Q_{\text{aux}}$ for public VLMs.

\textit{(2)} For fine-tuned models, we train Pythia-1.4B on FineWeb, NY, or BBC news in 2024. We train LLaVA using Flickr or MS COCO. To simulate the real-world fine-tuning, we add 118,000 unrelated samples into the training set of Pythia-1.4B and 20,000 into LLaVA. Each model is trained in one epoch.

\textit{(3)} For API-based models, we use GPT-4o. Since $Q_{\text{mem}}$ is unknown, we use famous books, \textit{Bible}, \textit{Pride and Prejudice} (Pride), and \textit{Harry Potter} (HP), and BookMIA containing member (B-Pos) and non-member (B-Neg) samples of OpenAI models. The non member in BookMIA (B-Neg) and BBC news in 2024 are $Q_{\text{non}}$ and $Q_{\text{aux}}$. 

To further validate SMI, we use different datasets as $Q_{\text{non}}$ and $Q_{\text{aux}}$ alternatively.


\textbf{Baselines.} To ensure the fairness of experiments, we assume all the baselines and SMI have no access to the ground-truth member data, but have access to $Q_{\text{aux}}$ which is non-member and does not necessarily follow the same distribution as member data. The baseline methods include DDI and three sample-level MIAs, A-NLL, Min-$k$\%~\cite{shi2023detecting} and zlib ratio~\cite{carlini2021extracting}. Each sample-level MIA calculates a membership score and uses it to determine the membership of one sample. We use the 45th percentile of the membership score of $Q_{\text{aux}}$ as the threshold (which is explained in Figure~\ref{fig:samplewise_thre_arrow} and Section~\ref{exp:main}). In the three MIAs, membership scores {lower} than the threshold will be classified as member. In addition, to better match the sample-level MIAs to the dataset-level tasks, we create variants of them by counting the positives in $Q$. We first use sample-level MIAs to classify each sample in $Q$. If more than 50\% samples are predicted as member by the sample-wise MIA, we classify $Q$ as member. Otherwise, it is classified as non-member. 

{\textbf{Evaluation metrics.} We use F1 score, recall, and precision as evaluation metrics. In each original dataset, there are at least 900 samples. 
For dataset-level inference, to increase the number of datasets and get more convincing results, we construct 300 sub-sets from the original datasets, with each sub-set consisting of 500 randomly sampled sequences ($N=500$) from the original dataset. The 300 sub-sets are composed of 100 $Q_{\text{mem}}$s, 100 $Q_{\text{non}}$s and 100 $Q_{\text{aux}}$s. The metrics are calculated by the label and prediction for the sub-sets. In addition, to better explain the variants of sample-level MIA, we also calculate the evaluation metrics at the sample level using all the samples in the original datasets and use them as a reference.

\textbf{Implementation details.} For all the models, we only use the output probability for membership inference. For all the results of SMI, we use $\epsilon_1 = 0.01$ and $\epsilon_2 = 10$. 
For GPT-4o, 
we use the chat template in Appendix~\ref{appd:gpt_4o_tempalte}}.

\subsection{Main results}
\label{exp:main}

In this subsection, we show the effectiveness of our method in different LLMs and VLMs. We reported the results of public model checkpoints, the model fine-tuned by ourselves, and API-based GPT-4o in Table~\ref{tab:pun_llm}. We get two observations from the results.

\textbf{1. SMI outperforms all the baseline methods across various models and datasets.} For \textit{public models and fine-tuned models}, Table~\ref{tab:pun_llm} shows that the average F1 scores of SMI consistently exceed 0.98, demonstrating that our method can accurately distinguish member data. In contrast, while dataset-level MIA variants achieve high F1 scores on some models, such as Pythia-6.9B and GPT-Neo-1.3B, they perform poorly on others like Pythia-1.4B and CogVLM-v1, resulting in significantly lower F1 scores and precision compared to our method. This is because previous MIA approaches rely on data-specific thresholds that are not consistently applicable across different models and datasets (which are detailed below). As for DDI, its performance is typically 0.667 (1.000/0.500), as it classifies every $Q$ as member data regardless of its true label. While this allows it to recall all member sets, it misclassifies all non-member sets as member, leading to a precision of 0.5.
For \textit{API-based GPT-4o}, our method significantly outperforms others, achieving an F1 score of 0.969. The performance of dataset-level MIA variants is worse than their performance on public and fine-tuned models. This difference is likely due to the lack of access to the probabilities of all tokens, which may affect the accuracy of their membership scores.



\begin{table*}[t]
    \caption{Sample-level MIAs. The results are F1 score (recall/precision).}
    \label{tab:samplelevel}
    \centering
    \resizebox{1\textwidth}{!}{
        \begin{tabular}{lccccc|c}
        \toprule
        \textbf{Public} LLM & Pythia-1.4B & Pythia-6.9B & Pythia-12B & GPT-Neo-1.3B & GPT-Neo-2.7B & \multirow{2}{*}{Average} \\
        $Q_{\text{mem}}/Q_{\text{non}}/Q_{\text{aux}}$ & PILE/F-CC/NY & PILE/NY/F-CC & PILE/F-CC/NY & PILE/NY/F-CC &  PILE/F-CC/NY \\

        
        \midrule
         
        A-NLL &  0.684 (0.877/0.561) &  0.785 (0.690/0.910) &  0.689 (0.898/0.559) &  0.783 (0.698/0.893) &  0.682 (0.882/0.556) &  0.725 (0.809/0.696)\\ 
         
        Min-$k\%$ &  0.645 (0.756/0.563) &  0.720 (0.658/0.794) &  0.666 (0.803/0.569) &  0.702 (0.649/0.764) &  0.661 (0.790/0.568) &  0.679 (0.731/0.651)\\ 
         
        zlib &  0.670 (0.886/0.539) &  0.778 (0.659/0.951) &  0.677 (0.906/0.541) &  0.782 (0.659/0.961) &  0.673 (0.892/0.540) &  0.716 (0.800/0.706)\\ 
        
        \bottomrule
        \toprule
        \textbf{Public} VLM & \multicolumn{3}{c}{LLaVA-v1.5} & CogVLM-v1 & CogVLM-v1-chat & \multirow{2}{*}{Average} \\
        $Q_{\text{mem}}/Q_{\text{non}}/Q_{\text{aux}}$ & TVQA/Flkr/NC & VG/NC/Flkr & COCO/Flkr/NC & Cog/NC/Flkr &  Cog/Flkr/NC \\
        \midrule
        A-NLL & 0.701 (0.830/0.606) & 0.822 (0.953/0.723) & 0.780 (0.983/0.646) & 0.371 (0.320/0.440) & 0.737 (0.992/0.586) & 0.682 (0.816/0.600)\\ 
        Min-$k\%$ & 0.701 (0.787/0.632) & 0.799 (0.961/0.683) & 0.795 (0.962/0.677) & 0.299 (0.251/0.371) & 0.712 (0.881/0.598) & 0.661 (0.768/0.592)\\ 
        zlib & 0.671 (0.865/0.549) & 0.765 (0.755/0.776) & 0.729 (0.982/0.580) & 0.861 (0.935/0.797) & 0.720 (0.999/0.563) & 0.749 (0.907/0.653)\\ 

        \bottomrule
        
        \end{tabular}
    }
\end{table*}

\begin{figure*}[t]
    \centering
    \begin{minipage}{0.245\linewidth}
        \centering
        \includegraphics[width=\textwidth]{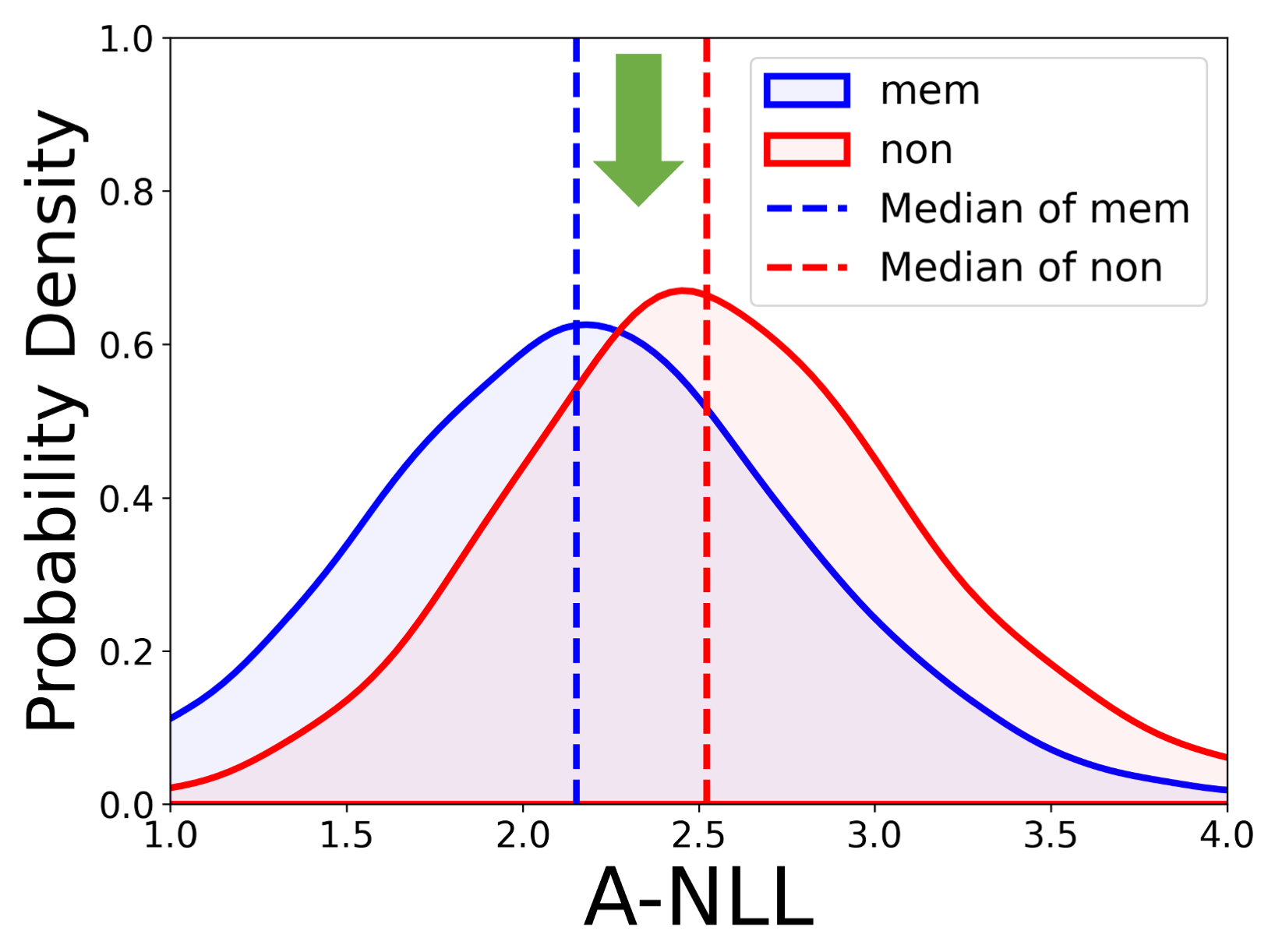} 
        \caption{Three regions for the threshold of sample-level MIAs}
        \label{fig:samplewise_thre_arrow}
    \end{minipage}\hfill
    \begin{minipage}{0.24\linewidth}
        \centering
        \includegraphics[width=\textwidth]{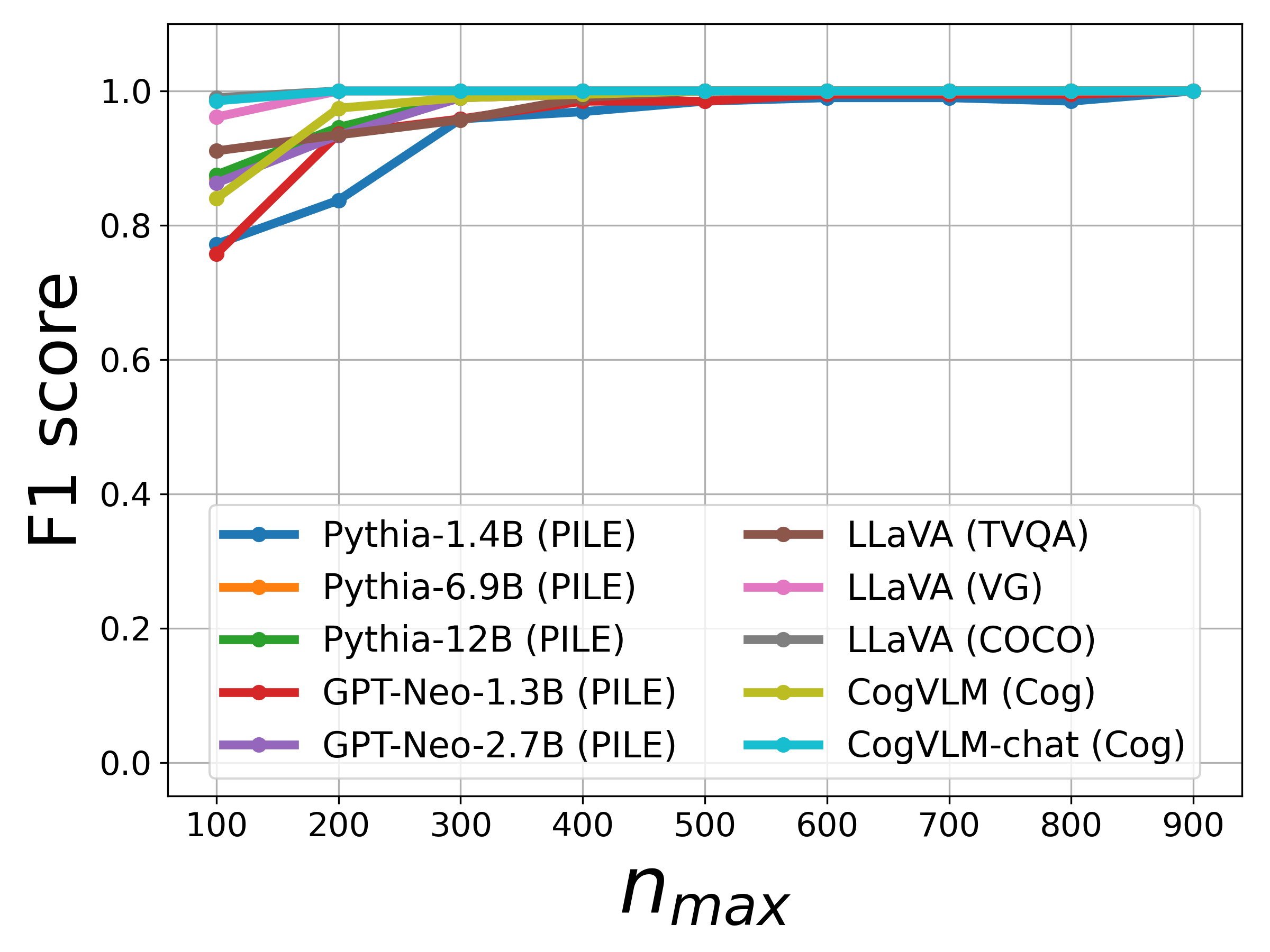}
        \caption{F1 score of SMI on different sample sizes}
        \label{fig:samplesize}
    \end{minipage}\hfill
    \begin{minipage}{0.24\linewidth}
        \centering
        \includegraphics[width=\textwidth]{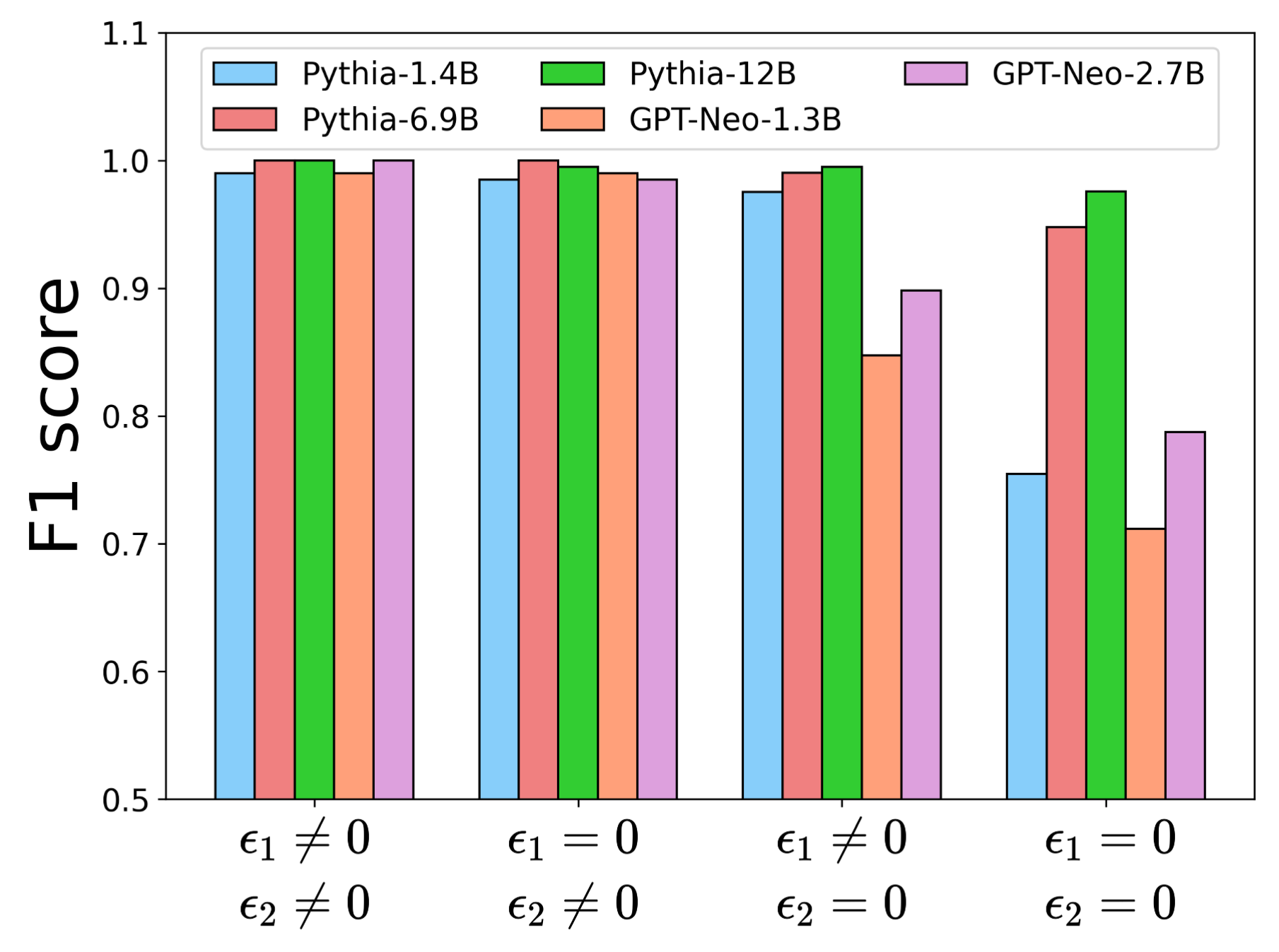}
        \caption{F1 score of SMI on different margin values}
        \label{fig:yuliang_x}
    \end{minipage}\hfill
    \begin{minipage}{0.24\linewidth}
        \centering
        \includegraphics[width=\textwidth]{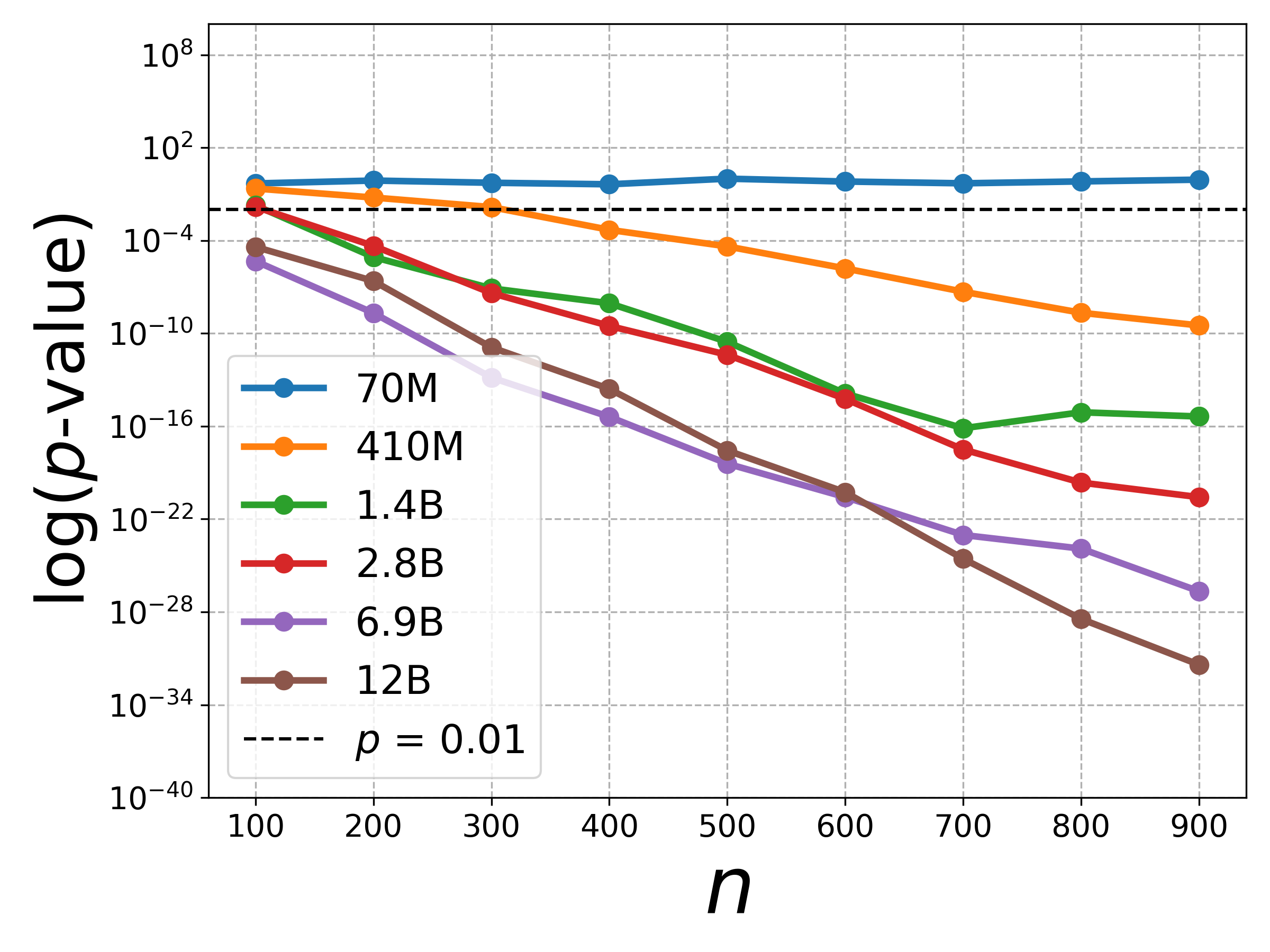}
        \caption{$p$-values of SMI on different model sizes}
        \label{fig:p_value_modelsize}
    \end{minipage}
\end{figure*}

\textbf{2. The performance of sample-level MIAs and their dataset-level variants are highly dependent on the choice of threshold.} From Table~\ref{tab:pun_llm}, it is evident that the dataset-level variants of MIAs, such as A-NLL (Dataset), often exhibit three distinct modes of F1 scores: 0.000, 0.667, and 1.000. These modes are a result of three different threshold choices. In Figure~\ref{fig:samplewise_thre_arrow}, we plot the distributions of A-NLL of $Q_{\text{mem}}$ and $Q_{\text{non}}$ and their medians. We can see that the two medians split the x-axis into three regions. When the threshold falls within the middle region (between the medians), indicated by the green arrow in Figure~\ref{fig:samplewise_thre_arrow}, more than 50\% member samples will be correctly classified as member. Since all the samples in $Q_{\text{mem}}$ are member samples, $Q_{\text{mem}}$ will be classified as member. Similarly, less than 50\% non-member data will be classified as member and $Q_{\text{non}}$ will be classified as non-member. This leads to the correct classification of every $Q$, and F1 score is 1.000. However, if the threshold falls in the right region, all the $Q$s will be classified as member set. In this case, the F1 score drops to 0.667 (1.000/0.500), which is similar to DDI, i.e., all the member sets are recalled, but the precision is only 0.5. On the other hand, if the threshold is in the left region, all $Q$s are classified as non-member sets, meaning no member sets are recalled, leading to an F1 score of 0.000 (0.000/0.000).
To choose the threshold from the middle region, we conjecture that the distribution of $Q_{\text{aux}}$ is more similar to $Q_{\text{non}}$ since they are both non-member. Then the middle region is possible to locate at the left of the median of $Q_{\text{aux}}$. Thus, we use the 45th percentile of the membership score of $Q_{\text{aux}}$ as the threshold.

In addition, it is important to note that there is a significant overlap between the distributions of $Q_{\text{mem}}$ and $Q_{\text{non}}$, which means it is hard to use a threshold to distinguish the member and non-member samples. We conduct sample-level inference experiments using sample-level MIAs and present the results of sample-level MIAs in Table~\ref{tab:samplelevel}. As shown, most of the sample-level MIAs achieve F1 scores of 0.65 to 0.7 on public models (which is very low since random guess has F1 score of 0.5, and predicting all the samples as member is 0.667). The observed overlap helps explain the poor performance of existing sample-level MIAs.

In summary, our method can achieve significantly better performance than baseline methods and does not rely on ground-truth data to determine a data-specific threshold.


\subsection{Ablation studies}
\label{exp:ablation}

In this subsection, we conduct ablation studies on sample size, margin values and model size.


\textbf{Sample size.} In some scenarios, models may restrict the number of allowed queries. Verifying membership with fewer samples becomes important. In Figure~\ref{fig:samplesize}, we show F1 scores of SMI on public models when fewer samples are available. Recall that we use $N$ to represent the number of the total available samples. The results demonstrate that the performance is stable and high when $N \geq 300$. Even with $N \geq 100$, SMI achieves F1 scores above 0.76 across all models, with most scores around 0.9. In summary, our method maintains strong performance even with a limited sample size.



\textbf{Margin values.} In Figure~\ref{fig:yuliang_x}, we demonstrate the effectiveness of the two margin values, $\epsilon_1$ and $\epsilon_2$. The results show that both $\epsilon_1$ and $\epsilon_2$ improve performance compared to the case without margin values. Using $\epsilon_2$ alone yields a higher F1 score than using $\epsilon_1$ alone. Furthermore, combining both margin values leads to an even better performance in the F1 score. This improvement occurs because the margin values help reduce noise in the sampling process.

\textbf{Model size.} Larger models usually have more parameters which might memorize larger datasets. In Figure~\ref{fig:p_value_modelsize}, we present the trend of $p$-values for the Pythia series across different model sizes. The results indicate that as model size increases, the $p$-value decreases faster. Notably, this decrease is evident even at a model size of 410M, demonstrating that our method can effectively distinguish member data at this scale. In contrast, smaller models, such as Pythia-70M, exhibit insufficient capacity to memorize large datasets. These smaller models do not raise significant concerns as they cannot effectively leverage the knowledge and corpus of the dataset.

\section{Conclusion}

In this paper, we propose a novel dataset-level membership inference method based on Self-Comparison. Instead of directly comparing member and non-member data, our approach leverages paraphrasing on the second half of the sequence and evaluates how the likelihood changes before and after paraphrasing. Unlike prior approaches, our method does not require access to ground-truth member data or non-member data in identical distribution, enhancing its practicality. Extensive experiments demonstrate that our proposed method outperforms traditional MIA and dataset inference techniques across various datasets and models, including public models, fine-tuned models, and API-based commercial models.


\bibliographystyle{ACM-Reference-Format}
\bibliography{sample-base}

\clearpage

\appendix

\section{Details on the prompts used in SMI}





\subsection{GPT-4o}
\label{appd:gpt_4o_tempalte}

The prompt for GPT-4o is :

``\textit{Please complete the following sentence. Output the next words directly! The incomplete sentence is:}''

\section{Additional experiments}
\label{appd:add_exp}

We conduct experiments to test the membership inference performance when part of data in $Q$ is used for training, and plot the results in Figure~\ref{fig:ratio_exp}. To simulate the training with partial $Q$, we do not directly re-train a model with part of $Q$. Instead, we use the public model checkpoints, but mix non-member data into $Q$ in the membership inference process. For instance, if $r$\% of $Q$ consists of non-member data, the result can be interpreted as the suspect model using only $r$\% of $Q$ for training.

From the results, we can see that when more than 40\% is used for training, on most of models, our method can perform well with F1 score higher than 0.9. In contrast, A-NLL (Dataset) is still suffering from the problem of threshold at all ratios. Without the prior knowledge to the ground-truth member data or data-specific threshold, the performance is inconsistent on different models. As for DDI, the membership inference still has a high false positive rate. In summary, our method still outperforms baseline methods when only part of $Q$ is used for training.





\begin{figure}[t]
    \centering
    \begin{subfigure}{0.7\linewidth}
        \centering
        \includegraphics[width=\linewidth]{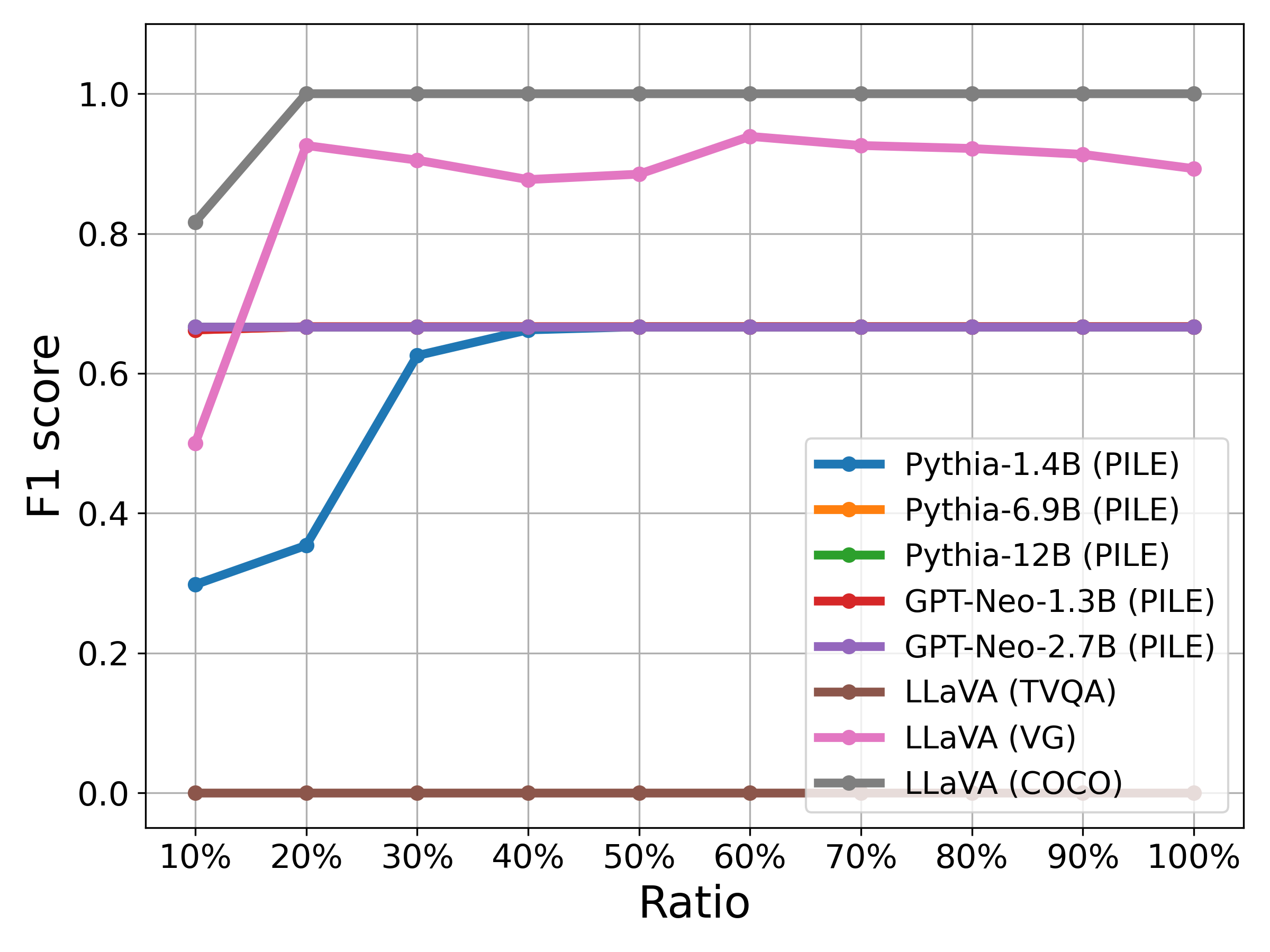} 
        \caption{A-NLL (Dataset)}
        \label{fig:pvalue_ratio_mia}
    \end{subfigure}
    \begin{subfigure}{0.7\linewidth}
        \centering
        \includegraphics[width=\linewidth]{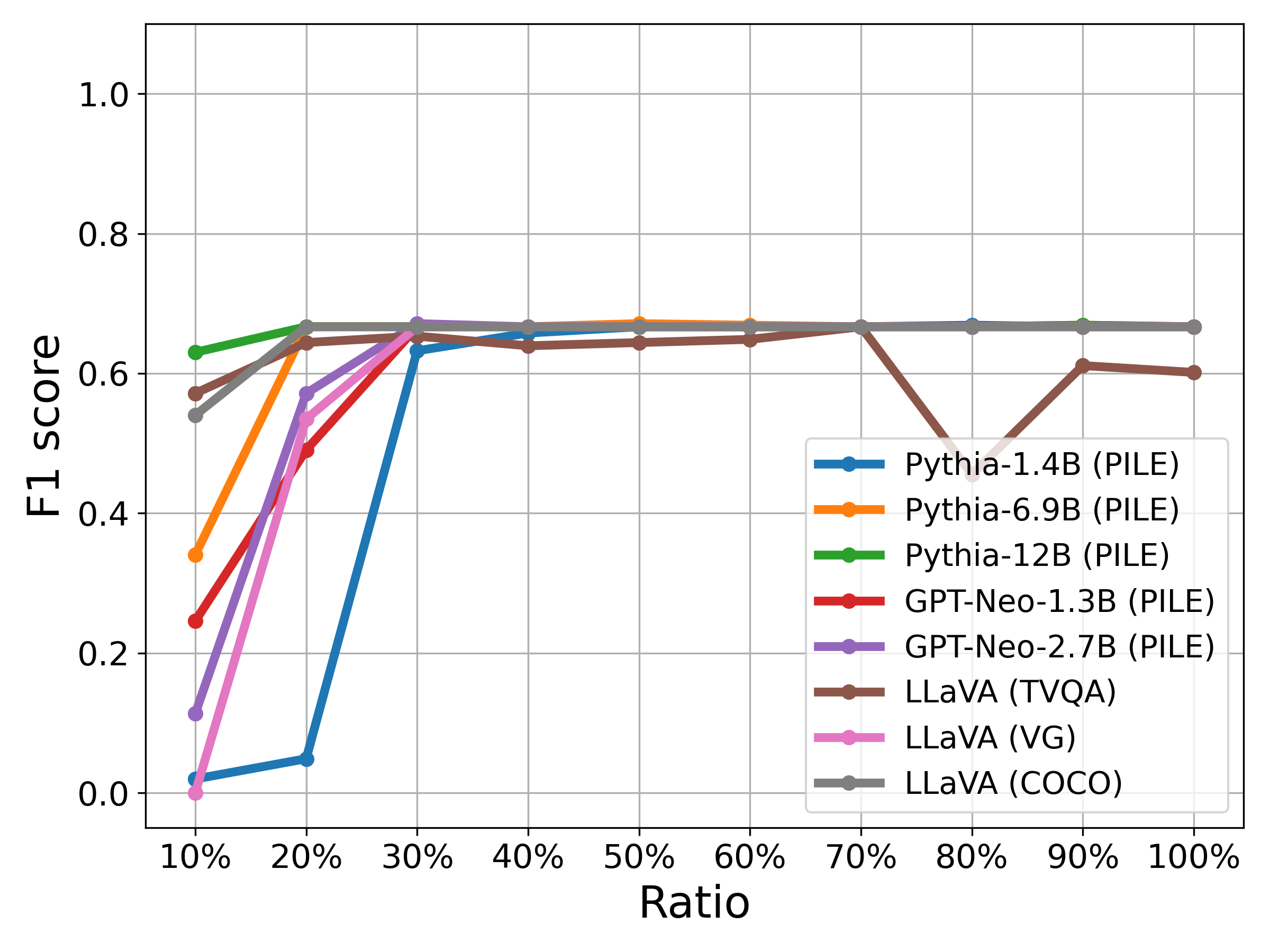}
        \caption{DDI}
        \label{fig:pvalue_ratio_ddi}
    \end{subfigure}
    \begin{subfigure}{0.7\linewidth}
        \centering
        \includegraphics[width=\linewidth]{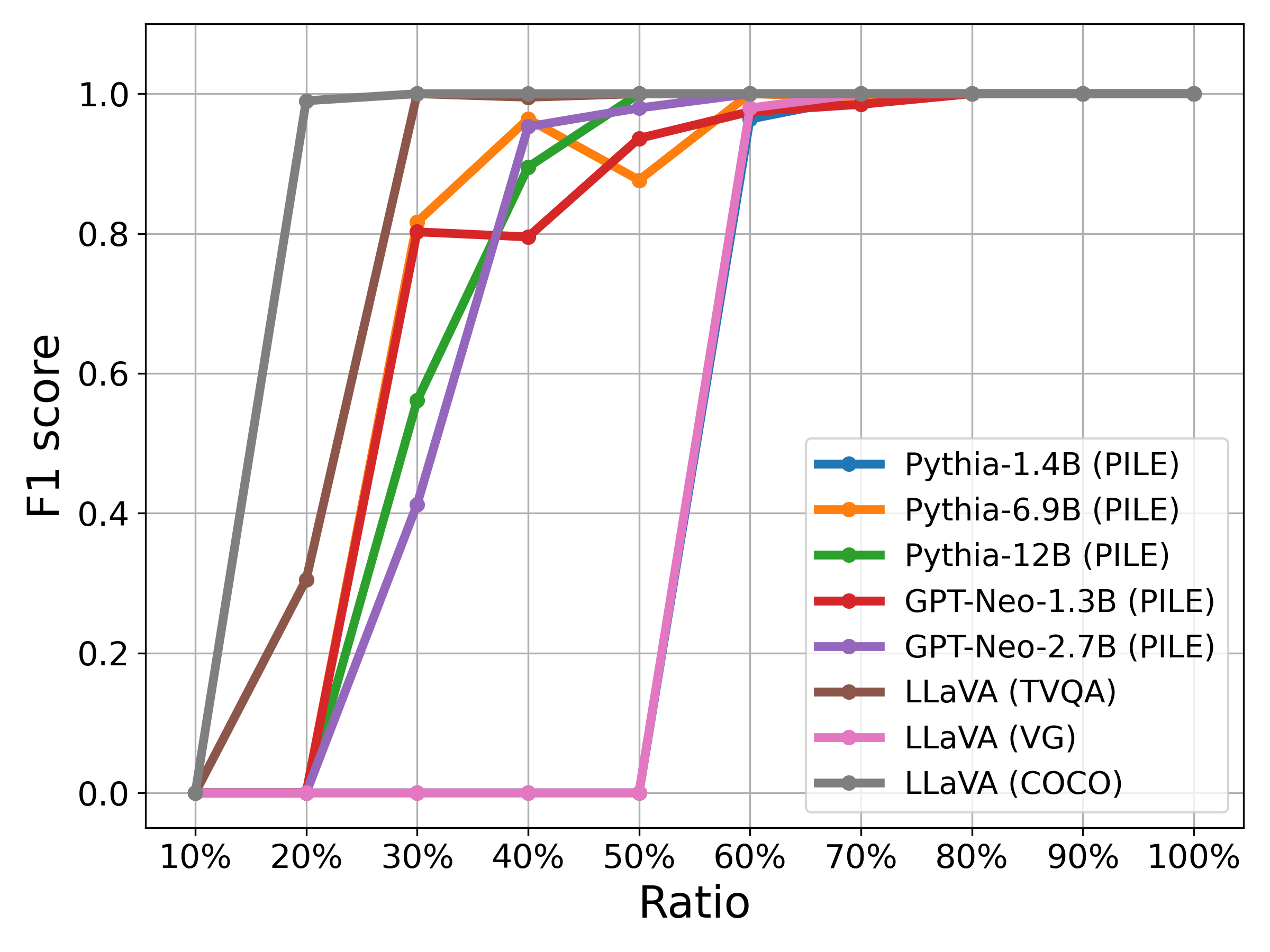}
        \caption{SMI (ours)}
        \label{fig:pvalue_ratio}
    \end{subfigure}
    \caption{Membership inference when part of $Q$ is used for training. The x-axis is the ratio of $Q$ used for training.}
    \label{fig:ratio_exp}
\end{figure}

\end{document}